\newcommand{\bE}{\mathbb{E}}
\newcommand{\bN}{\mathbb{N}}
\newcommand{\bP}{\mathbb{P}}
\newcommand{\cD}{\mathcal{D}}
\newcommand{\cH}{\mathcal{H}}
\newcommand{\cX}{\mathcal{X}}
\newcommand{\cY}{\mathcal{Y}}
\newcommand{\KL}{\operatorname{KL}}
\newcommand{\TU}{\operatorname{TU}}
\newcommand{\AU}{\operatorname{AU}}
\newcommand{\EU}{\operatorname{EU}}
\newcommand{\bma}{\bar{\vtheta}}
\newcommand{\btheta}{\bar{\theta}}
\newcommand{\vx}{\bm{x}}
\newcommand{\htheta}{\hat{\theta}}
\newcommand{\vtheta}{\bm{\theta}}
\newcommand{\hvtheta}{\hat{\bm{\theta}}}
\newcommand{\sumK}{\sum_{k=1}^K}
\newcommand{\sumKK}{\sum_{k'=1}^K}
\newcommand{\ksimplex}{\Delta_K}
\newcommand{\ksimplextwo}[1][K]{\Delta_{#1}^{(2)}}
\newcommand{\on}[1]{\operatorname{#1}}
\DeclareMathOperator*{\argmax}{arg\,max}
\DeclarePairedDelimiter\floor{\lfloor}{\rfloor}
\newcommand*{\defeq}{\mathrel{\vcenter{\baselineskip0.5ex \lineskiplimit0pt
			\hbox{\footnotesize.}\hbox{\footnotesize.}}}%
	=}
\theoremstyle{plain}
\newtheorem{theorem}{Theorem}[section]
\newtheorem*{theorem*}{Theorem}
\newtheorem*{corollary*}{Corollary}
\newtheorem{proposition}[theorem]{Proposition}
\newtheorem*{proposition*}{Proposition}
\theoremstyle{definition}
\newtheorem{definition}[theorem]{Definition}
\newtheorem*{definition*}{Definition}
\theoremstyle{remark}
\title{Uncertainty Quantification with Proper Scoring Rules: \\ Adjusting Measures to Prediction Tasks}
\author{%
  Paul Hofman \\
  LMU Munich,\\
  Munich Center for Machine Learning \\
  \texttt{paul.hofman@ifi.lmu.de}
  \And
  Yusuf Sale \\
  LMU Munich,\\
  Munich Center for Machine Learning \\
  \texttt{yusuf.sale@ifi.lmu.de} \\
  \And
  Eyke Hüllermeier \\
  LMU Munich,\\
  Munich Center for Machine Learning,\\
  German Centre for Artificial Intelligence (DFKI), DSA \\
  \texttt{eyke@ifi.lmu.de}
}
\begin{document}

\maketitle

\begin{abstract}
We address the problem of uncertainty quantification and propose measures of total, aleatoric, and epistemic uncertainty based on a known decomposition of (strictly) proper scoring rules, a specific type of loss function, into a divergence and an entropy component. 
This leads to a flexible framework for uncertainty quantification that can be instantiated with different losses (scoring rules), which makes it possible to tailor uncertainty quantification to the use case at hand. We show that this flexibility is indeed advantageous. In particular, we analyze the task of selective prediction and show that the scoring rule should ideally match the task loss. In addition, we perform experiments on two other common tasks. For out-of-distribution detection, our results confirm that a widely used measure of epistemic uncertainty, mutual information, performs best. Moreover, in the setting of active learning, our measure of epistemic uncertainty based on the zero-one-loss consistently outperforms other uncertainty measures.
\end{abstract}

\section{Introduction}\label{sec:introduction}
Uncertainty quantification (UQ), the assessment of a model's uncertainty in predictive tasks, has become an increasingly prominent topic in machine learning research and practice.
A common distinction is made between \emph{aleatoric} and \emph{epistemic} uncertainty \citep{hullermeier2021aleatoric}.
Broadly speaking, aleatoric uncertainty originates from the inherent stochastic nature of the data-generating process, while epistemic uncertainty is due to the learner's incomplete knowledge of this process. The latter can therefore be reduced by acquiring additional information, such as more training data, whereas aleatoric uncertainty, as a characteristic of the data-generating process, is non-reducible. 

Due to inherent challenges in representing epistemic uncertainty, higher-order formalisms -- most notably second-order distributions (i.e., distributions over distributions) -- are typically employed.
Given a suitable uncertainty \emph{representation}, the key question that follows is how to appropriately quantify (\emph{viz.} assign a numerical value) total, aleatoric and epistemic uncertainty. 
In the case of a second-order uncertainty representation, entropy-based measures remain the standard approach \citep{depeweg2018decomposition}.
However, recent research has raised concerns about whether these measures adhere to fundamental principles of proper uncertainty quantification \citep{wimmer2023quantifying}.
In this spirit, investigating alternative uncertainty measures presents a promising direction for addressing the limitations of existing approaches.

Uncertainty measures in machine learning literature have largely been treated as a one-fits-all solution, with little emphasis on adapting them to specific \emph{tasks}. Recent work  \citep{muscanyiBenchmarkingUncertainties2024} suggests that different tasks may require different uncertainty methods and, consequently, different uncertainty measures. This is particularly important given that, in the absence of an observable baseline, uncertainty quantification is typically evaluated empirically through (downstream) tasks such as selective-prediction, out-of-distribution detection (OoD), or active learning -- each of which may require different uncertainty measures.

These considerations highlight the need for a more principled approach to uncertainty quantification -- one that not only accounts for theoretical soundness but also aligns with the specific requirements of underlying \emph{tasks}. In this paper, we introduce a flexible construction principle for uncertainty measures that generalizes existing approaches and enables task-specific adaptations. 
To this end, we propose a framework for uncertainty quantification based on a well-known decomposition of proper scoring rules. 
This decomposition gives rise to a (loss-based) \emph{family} of total, aleatoric, and epistemic uncertainty measures, which encompass traditional measures as a special case.
By linking uncertainty quantification to tasks (and with that to the corresponding loss this task is evaluated on), we show that for some tasks, the chosen uncertainty measure -- more precisely, the loss used to instantiate it -- is well aligned with the specific task at hand.
We demonstrate this both theoretically and empirically: Theoretically, we establish a formal connection between task losses and the losses used to construct uncertainty measures, showing that optimal uncertainty quantification requires alignment between these components. Empirically, we validate our framework across multiple downstream tasks, including selective prediction, out-of-distribution detection, and active learning, confirming that different tasks benefit from different uncertainty measures.

\textbf{Contributions.} In summary, our contributions are as follows:
\begin{itemize}
    \item[\textbf{1.}] We introduce a novel construction principle for uncertainty measures that generalizes existing measures and allows for customizable instantiations tailored to specific tasks. This flexibility ensures that the most appropriate measure can be chosen based on the requirements of the downstream application at hand.
    \item[\textbf{2.}] In particular, we study the selective prediction task to analyze what underlying task losses are used and formally link task losses to the losses used for generating uncertainty measures.
    \item[\textbf{3.}] We show that in selective prediction, the task loss and the uncertainty loss should be aligned and that the total uncertainty component should be used.
    \item[\textbf{4.}] Additionally, we experimentally evaluate other common downstream tasks with different measures of uncertainty and show that the uncertainty measures should be adapted to the specific downstream task. 
\end{itemize}

\textbf{Related Work.}
Several ways to represent uncertainty using second-order distributions have been proposed, many of which rely on Bayesian inference \citep{hintonKeeping1993, blundellWeight2015} or approximations thereof \citep{galDropout2016, lakshminarayananDeep2017, daxbergerLaplace2021}. Other methods such as Evidential Deep Learning (EDL) allow for direct prediction of a second-order distribution in the form of a Dirichlet distribution \citep{sens_ed18, aminiDeep2020}. However, recently such methods have been criticized for not representing epistemic uncertainty in a faithful way \citep{juergensIs2024, pandeyLearn2023, bengs2022pitfalls}. Hence, in the following, we stick to a Bayesian approach to uncertainty representation. 

Based on this, the most commonly used measures are based on an information-theoretic decomposition of Shannon entropy \citep{depe_du18}. However, recently these measures have been criticized on the basis of properties that measures of aleatoric and epistemic uncertainty should fulfill \citep{wimmer2023quantifying}. A modification has been proposed, which essentially replaces the comparison to the Bayesian model average with another expectation \citep{schweighoferInformation2024}. 
The uncertainty quantification method put forward in this paper generalizes the information-theoretic measures. Besides information-theoretic approaches, there have also been proposals of uncertainty measures that are based on a decomposition of risk or loss. \citet{lahlouDirect2023} propose a method to directly quantify epistemic uncertainty based on the difference between total risk and Bayes risk, but they do not consider second-order representations of uncertainty. \citet{gruberUncertainty2023} use a general bias-variance decomposition to quantify uncertainty based on the Bregman information \citep{banerjeeOptimal2004}, but they do not consider aleatoric uncertainty and only consider mutual information as a concrete instantiation. Recently, \citet{kotelevskii2025risk} have, independently, introduced an uncertainty quantification approach similar to the one proposed in this work. However, they do not make the explicit connection between individual instantiations and downstream tasks, as we do here. 

Recently, \cite{smithRethinkingAleatoric2024} have argued that one should start reasoning about uncertainty by considering the predictive task at hand.
To the best of our knowledge, we are the first to connect, both theoretically and experimentally, downstream tasks to different uncertainty measures in a unified framework in the machine learning literature. 
Other related work has focused on comparing different methods to represent uncertainty and showing that certain representations perform better on certain tasks \citep{muscanyiBenchmarkingUncertainties2024, deJongHowDisentangled2024}. 

\section{Preliminaries}\label{sec:background}
In this paper we consider classification problems in the supervised learning setting. Consider an instance space $\cX$ and a target (label) space $\cY$ with $K \in \bN$ classes such that $\cY = \{1, \ldots, K\}$. We have access to training data $\cD_{\rm{train}} = \{(\vx_i, y_i)\}_{i=1}^n \in (\cX \times \cY)^n$ where the tuples $(\vx_i, y_i)$ are the realizations of random variables $(X_i, Y_i)$ which we assume to be distributed according to a probability measure on $\cX \times \cY$. We consider a hypothesis space $\cH$ with probabilistic predictors $h : \cX \to \bP(\cY)$ that map an instance to a categorical distribution on $\cY$. Given this, we aim to learn a hypothesis $h \in \cH$ such that $h(\vx) = \hvtheta$ is a good approximation of the ground-truth conditional distribution on $\cY$ given $X = \vx$, which we denote by $\vtheta$. Furthermore, we identify $\bP(\cY)$, the class of all probability measures on $\cY$ with the $(K-1)$-simplex $\Delta_K$, hence both the estimate $\hvtheta = (\htheta_1, \ldots, \htheta_K)$ and the ground-truth $\vtheta = (\theta_1, \ldots, \theta_K)$ are vectors in this simplex. We will further denote the extended real number line by $\bar{\mathbb{R}} \defeq \mathbb{R} \cup \{-\infty, +\infty\}$. 

A probabilistic model $h$ predicts a probability distribution that captures \emph{aleatoric} uncertainty about the outcome $y \in \cY$, but pretends full certainty about the distribution $\vtheta$ itself. In order to represent \textit{epistemic} uncertainty, we consider a Bayesian representation of uncertainty. Hence, we assume that we have access to a posterior distribution $q(h \mid \cD)$. The posterior distribution gives rise to a distribution over distributions $\vtheta$ through $Q(\vtheta) = \int_\cH \llbracket h(\vx) = \vtheta \rrbracket  dq(h \mid \cD)$ with $Q \in \ksimplextwo$, the set of all probability measures on $\ksimplex$. We will refer to this distribution as a second-order distribution over first-order distributions. To make predictions, a representative first-order distribution is generated by Bayesian model averaging $\bma = \int_{\cH} h(\vx) dq(h \mid \cD)$. 

Given this representation, there are many ways to quantify uncertainty associated with a second-order distribution. In classification, the most commonly used measures are based on Shannon entropy $S(\vtheta) = -\sum_{k=1}^K \vtheta_k \log \vtheta_k$, which in essence measures how close the (first-order) distribution $\vtheta$ is to a uniform distribution. The Shannon entropy allows a decomposition into conditional Shannon entropy and mutual information \citep{coverElementsOf2006}:
\begin{equation}\label{eq:entropy}
    \underbrace{S(\bma)}_{\text{TU}} = \underbrace{\bE[S(\vtheta)]}_{\text{AU}} + \underbrace{\bE[\KL(\vtheta \parallel \bma)]}_{\text{EU}}.
\end{equation}
In practice, we usually only have access to samples of the posterior, e.g., through an ensemble of predictors. Thus, we use a finite approximation $\bma = \frac{1}{M}\sum_{m=1}^M h^m(\vx)$, where $M$ denotes the number of ensemble members or samples drawn from the posterior in the case of e.g.\  variational inference. The respective uncertainties are then approximated by
\begin{equation*}
    S(\bma) = \frac{1}{M}\sum_{m=1}^M S(\vtheta^m) + \frac{1}{M}\sum_{m=1}^M \KL(\vtheta^m \parallel \bma) \, , 
\end{equation*}
where $\vtheta^m$ denotes the predictive distribution of ensemble member or sample $m$.

\section{Loss-Based Uncertainty Quantification}\label{sec:loss-based}
In this section, we give an overview of proper scoring rules, explain how they can be used to quantify 
uncertainty, and we instantiate our method with commonly used proper scoring rules.

\subsection{Proper Scoring Rules}
In the domain of probabilistic forecasting and decision-making, proper scoring rules are instrumental for the rigorous assessment and comparison of predictive models. These scoring rules, originating from the seminal works of \cite{savage1971elicitation} and further developed by \cite{gnei_sp05}, provide a mechanism to assign numerical scores to probability forecasts, rewarding accuracy and honesty in predictions. Proper scoring rules, such as the Brier score and the log score, are uniquely characterized by their \textit{properness}—a property ensuring that the forecaster's expected score is optimized only when announcing probabilities that correspond to their true beliefs.

A function defined on $\cY$ taking values in $\bar{\mathbb{R}}$ is $\Delta_K$-quasi-integrable if it is measurable with respect to $2^{\mathcal{Y}}$,
and is quasi-integrable with respect to all $\vtheta\in\Delta_K$. We assume scoring rules to be negatively oriented, thus taking a classical machine learning perspective where we wish to minimize the corresponding loss. 

\begin{definition}
\label{def:psr}
    A scoring rule is a function $\ell : \Delta_K \times \cY \to \bar{\mathbb{R}}$ such that $\ell(\hat{\vtheta}, \cdot)$ is $\Delta_K$-quasi-integrable for all $\hat{\vtheta} \in \Delta_K$. We further write 
    \begin{equation}\label{eq:eps}
    L_\ell(\hat{\vtheta}, \vtheta) = \mathbb{E}_{Y\sim\vtheta}[\ell(\hat{\vtheta},Y)]
    \end{equation}
    to denote the expected loss. A scoring rule $\ell$ such that
    \begin{align}
        L_\ell(\vtheta, \vtheta) \leq L_\ell(\hat{\vtheta}, \vtheta) \quad  \text{for all} \; \hat{\vtheta}, \vtheta \in \Delta_K
        \label{eq:psr}
    \end{align}
    is called \emph{proper}. The scoring rule is \emph{strictly proper} if \eqref{eq:psr} holds with equality if and only if $\hat{\vtheta} = \vtheta$.
\end{definition}

It is well known that scoring rules and their corresponding expected losses can be decomposed into a \textit{divergence} term and an \textit{entropy} term \citep{gnei_sp05, kullNovel2015}: 
\begin{align*}
D_\ell(\hat{\vtheta}, \vtheta) & = L_\ell(\hat{\vtheta}, \vtheta) - L_\ell(\vtheta, \vtheta)  \\  
H_\ell(\vtheta) & = L_\ell(\vtheta, \vtheta) 
\end{align*}
The latter captures the expected loss that materializes even when the ground truth $\vtheta$ is predicted, whereas the former represents the ``excess loss'' that is caused by predicting $\hat{\vtheta}$ and hence deviating from the optimal prediction $\vtheta$.
This highlights the inherent connection to the quantification of epistemic and aleatoric uncertainty: $H_\ell(\vtheta)$ is the irreducible part of the risk, and hence relates to aleatoric uncertainty, whereas $D_\ell(\hat{\vtheta}, \vtheta)$ is purely due to the learner's imperfect knowledge\,---\,or epistemic state\,---\,and could be reduced by improving that knowledge. Note that $D_\ell$ is a Bregman divergence when $\ell$ is a \emph{strictly} proper scoring rule \citep{bregmanTheRelaxation1967}, hence $D_\ell = 0$ if and only if $\hvtheta = \vtheta$. In this situation, the epistemic state cannot be improved any further. Common examples of Bregman divergences are the KL-divergence and the squared Euclidean distance. On the contrary, when $\ell$ is only a proper scoring rule, i.e., not strictly proper, $D_\ell$ does not define a Bregman divergence and may be minimized even if $\hvtheta \neq \vtheta$. We will later give an example of such a divergence and show that good performance on downstream tasks may be achieved even if this property is violated.

\begin{wraptable}{r}{0.5\textwidth}
\centering
\caption{\textbf{(Strictly) proper scoring rules.}}
\begin{tabular}{llll}
\toprule
Loss & \\ \midrule
\textit{log} & $\ell(\hvtheta,y) = -\log(\htheta_y)$ \\[0.1cm]
\textit{Brier} & $\ell(\hvtheta,y) = \sum_{k=1}^K(\htheta_k - \llbracket k = y\rrbracket)^2$ \\[0.1cm]
\textit{zero-one} & $\ell(\hvtheta,y) = 1-\llbracket \argmax_k \htheta_k = y\rrbracket$ \\ \bottomrule
\end{tabular}
\label{tab:losses}
\end{wraptable}

So far we have reasoned strictly on the basis of first-order distributions and assumed access to the ground-truth conditional distribution $\vtheta$, but in practice uncertainty about this distribution is modeled using a second-order distribution. In the following, we will further define the relation between (strictly) proper scoring rules and uncertainty quantification with second-order distributions, proposing new measures and showing how the currently used measures are related.

\subsection{Uncertainty Measures}
Recall that, in the Bayesian case, the learner holds belief in the form of a probability distribution $Q$ on $\Delta_K$, hence the ground-truth distribution $\vtheta$ is replaced by a second-order distribution $Q$ over (first-order) distributions $\vtheta$. Epistemic uncertainty is then defined in terms of mutual information, which can also be written as follows:
\begin{align}
    \on{EU}(Q) & = \mathbb{E}_{\vtheta \sim Q}[D_\ell(\bar{\vtheta}, \vtheta)]  \label{eq:eunft} \\
    & = \mathbb{E}_{\vtheta \sim Q}[L_\ell(\bar{\vtheta}, \vtheta) - L_\ell(\vtheta, \vtheta)] \nonumber \\
    & = \underbrace{\mathbb{E}_{\vtheta \sim Q}[L_\ell(\bar{\vtheta}, \vtheta) ]}_{\text{TU}(p)} -  
    \underbrace{\mathbb{E}_{\vtheta \sim Q} [ L_\ell(\vtheta, \vtheta)]}_{\text{AU}(p)} \label{eq:tuau}
\end{align}
where $L_\ell$ is instantiated with $\ell$ as the log-loss. 
That is, EU is the \emph{gain}\,---\,in terms of loss reduction\,---\,the learner can expect when predicting, not on the basis of the uncertain knowledge $Q$, but only after being revealed the true $\vtheta$. Intuitively, this is plausible: The more uncertain the learner is about the true $\vtheta$ (i.e., the more dispersed $Q$), the more it can gain by getting to know this distribution.
The connection to proper scoring rules is also quite obvious:
\begin{itemize}
    \item Total uncertainty in (\ref{eq:tuau}) is the expected loss of the learner when predicting optimally ($\bar{\vtheta}$) on the basis of its uncertain belief $Q$. It corresponds to the expectation (with regard to $Q$) of the expected loss (\ref{eq:eps}). 
    Broadly speaking, we average the score of the prediction $\bar{\vtheta}$ over the potential ground-truths $\vtheta \sim Q$. 
    \item Aleatoric uncertainty is the expected loss that remains, even when the learner is perfectly informed about the ground-truth $\vtheta$ before predicting. Again, we average over the potential ground-truths $\vtheta \sim Q$.
    \item Epistemic uncertainty is the difference between the two, i.e., the expected loss reduction due to information about $\vtheta$. When $\ell$ is taken to be a strictly proper scoring rule, \eqref{eq:eunft} is also known as the Bregman information \citep{banerjeeOptimal2004}. 
\end{itemize}

This decomposition, which is ``parameterized'' by the scoring rule $\ell$, gives rise to a broad spectrum of uncertainty measures each with their own properties and empirical behaviors. A question that one may ask is: which uncertainty measures are most suitable for specific downstream tasks? In \cref{sec:task-uncertainty}, we will address this question, analyzing common downstream tasks and providing guidance on selecting the right measures.
In the following, we consider three measures of total, aleatoric and epistemic uncertainty generated by three commonly used proper scoring rules, which we list in \cref{tab:losses} together with an overview of their decomposition into aleatoric and epistemic uncertainty in \cref{tab:measures}. 

The log-loss, a strictly proper scoring rule, generates the commonly used entropy-based measures of conditional entropy and mutual information for aleatoric and epistemic uncertainty (Eq. \ref{eq:entropy}), respectively \citep{depeweg2018decomposition}. While these measures have been criticized \citet{wimmer2023quantifying}, they are still the most commonly used uncertainty measures in the classification setting.

The Brier-loss \citep{brierVerificationOf1950} is another strictly proper scoring rule, which is often used as a measure of calibration \citep{mindererRevisitingCalibration2021, clarteExpectationConsistency2023}. The decomposition generates the measures of expected Gini impurity for aleatoric uncertainty. The Gini impurity quantifies the probability of misclassification when predicting randomly according to the ground-truth distribution, i.e. $\hvtheta = \vtheta$. The measure of epistemic uncertainty is the expected squared difference. This measure has also been proposed by \citet{smithUnderstandingMeasures2018} and was shown to satisfy desirable properties in \citep{saleLabelWise2024}.

The zero-one-loss, arguably the most often used measure to evaluate machine learning algorithms, is the only proper scoring rule considered here that is not strictly proper. The aleatoric component of this measure is the expected complement of the confidence. Assuming $\hvtheta = \vtheta$, it quantifies the probability of misclassification when predicting the class with the highest probability. Quantifying uncertainty on the basis of the confidence of the model is common \citep{hendrycksABaseline2017}, but in a second-order representation, this measure has not been used before. Coincidentally, this measure aligns with the measure of aleatoric uncertainty proposed in \citep{sale2023second}.
The epistemic component of this decomposition has, to the best of our knowledge, not been used in machine learning so far. It is minimized when all first-order distributions $\vtheta$ in the support of the second-order distribution $Q$ have the same argmax as the Bayesian model average $\bma$, where the support is defined as $\operatorname{supp}(Q) = \{\vtheta \in \Delta_K : Q(\vtheta) > 0\}$. However, while this means that all first-order distributions ``agree'' on the most likely class, there may still be epistemic uncertainty regarding the ground-truth distribution $\vtheta$. 

\begin{table*}[t!]
\centering
\caption{\textbf{Total, aleatoric and epistemic uncertainty} based on (strictly) proper scoring rules.}
\begin{tabularx}{\textwidth}{lXXl}
\toprule
Loss & Total & Aleatoric & Epistemic \\ \midrule
\textit{log} & $S(\bma)$ & $\mathbb{E}_{\vtheta \sim Q}[S(\vtheta)]$ & $\bE^{}_{\vtheta \sim Q}[D_{KL}(\vtheta \parallel \bma)]$ \\[0.1cm]
\textit{Brier} & $1 - \sumK\bar{\theta}_k^2$ &  $\mathbb{E}_{\vtheta \sim Q}[1 - \sumK\theta^2_k]$ & $\bE^{}_{\vtheta \sim Q}[\sumK(\bar{\theta}_k - \theta_k)^2]$ \\[0.1cm]
\textit{zero-one} & $1 - \max_k \bar{\theta}_k$ & $\bE_{\vtheta \sim Q}[1-\max_k\theta_k]$ & $\bE_{\vtheta \sim Q}[\max_k \theta_k - \theta_{\argmax_k\bar{\theta}_k}]]$ \\ \bottomrule
\end{tabularx}
\label{tab:measures}
\end{table*}

\section{Customized Uncertainty Quantification}\label{sec:task-uncertainty}
In the following, we denote the space of loss functions $\ell : \Delta_K \times \mathcal{Y} \to \bar{\mathbb{R}}$, such that $\ell(\hat{\vtheta}, \cdot)$ is $\Delta_K$-quasi-summable for all $\hat{\vtheta} \in \Delta_K$ by $\mathcal{L}(\Delta_K, \mathcal{Y})$.
Further, denote by  $U: \mathcal{Q}(\mathcal{Y}) \rightarrow \mathbb{R}_{\geq 0}$ an uncertainty measure mapping a second-order distribution to a positive real number. 
As already noted, our proposed construction based on proper scoring rules yields an entire \emph{family} of uncertainty measures
\begin{equation}\label{eq:fum}
\mathcal{U} = \{U_\ell : \ell \in \mathcal{L}(\Delta_K, \mathcal{Y})\} \,  ,
\end{equation}
namely, measures of total, aleatoric, and epistemic uncertainty, induced by the specific choice of the loss $\ell \in \mathcal{L}(\Delta_K, \mathcal{Y})$. 
While interesting in its own right, one may wonder about the practical value of such a spectrum of uncertainty measures. 

\subsection{Task Loss vs.\ Uncertainty Loss}
In most real-world applications, such as selective prediction or out-of-distribution detection, we are not merely interested in quantifying uncertainty for its own sake. Instead, we are interested in how and to what extent the uncertainty quantification supports the goal of the (downstream) task. The latter is typically characterized in terms of (test) data $\mathcal{D}_{\rm{test}}$ to which the method is applied. Suppose that the method's performance on the task is again measured in terms of a loss function $\ell_{\rm{task}}$, which we call the \emph{task loss}. Moreover, to distinguish this loss from the loss $\ell$ that determines the uncertainty measure $U_\ell$, we refer to the latter as the \emph{uncertainty loss}. 

The task loss may have the same structure as the uncertainty loss, i.e., $\ell_{\rm{task}} : \Delta_K \times \mathcal{Y} \to \mathbb{R}_{\geq 0}$ quantifies the cost associated with predicting the probability distribution $\hat{\vtheta} \in \Delta_K$ when the true outcome is $y \in \mathcal{Y}$, and the overall loss is the average over the predictions on $\mathcal{D}_{\rm{test}}$. In general, however, $\ell_{\rm{task}}$ can be a complex loss function that is neither defined in an instance-wise manner nor decomposable over the data points in $\mathcal{D}_{\rm{test}}$. In selective prediction, for example, the performance is determined by the \emph{ordering} of the data points (according to their uncertainty). Thus, a loss cannot be assigned to an individual data point anymore. Instead, the uncertainty score assigned to a data point can only be assessed in comparison to others. We consider the case of selective prediction in more detail further below.  
The connection between the spectrum of uncertainty measures (\ref{eq:fum}) and the task loss arises from the observation that different choices of the uncertainty loss $\ell$ might be appropriate for different task losses. Here, we call one loss $\ell$ more appropriate than another loss $\ell'$ if the use of the uncertainty measure $U_{\ell}$ in the downstream task leads to a lower loss $\ell_{\rm{task}}$ (in expectation) than the use of the measure $U_{\ell'}$\,---\,we may also say that $\ell$ is better aligned with $\ell_{\rm{task}}$ than $\ell'$ is. We will illustrate this for the task of selective prediction. 

\subsection{Selective Prediction} \label{sec:sel}
Selective prediction is a task where the model can abstain from making a prediction on some inputs if it is uncertain about the correct outcome. 
Formally, let the test set be denoted as $\cD_{\rm{test}} = \{(\vx_i, y_i)\}_{i = 1}^n$, where for each instance $x_i$, a predictive model outputs a second-order distribution 
$Q_i \in \ksimplextwo$.
Further, for $\alpha \in [0,1]$ let $k =  \floor{\alpha n}$ be a (fixed) rejection level, which dictates the number of instances for which the model is allowed to abstain from making a prediction. 
The permutation $\pi$ of $\{1,\dots,n\}$ is defined so that
\begin{align*}
    U(Q_{\pi(1)}) \geq  U(Q_{\pi(2)}) \geq \dots \geq U(Q_{\pi(n)}).
\end{align*}
In other words, the permutation $\pi$ sorts instances by their uncertainty, as quantified by the measure $U$. Again, let $\ell \in \mathcal{L}(\Delta_K, \mathcal{Y})$ be a any loss function. Then, the area under the loss-rejection curve (AULC) is defined as  
\begin{align}\label{def:aulc}
    \rm{AULC} = \int_{0}^{1} \left( \frac{1}{\floor{\alpha n}} \sum_{j = 1}^{\floor{\alpha n}} \ell^*(\hat{\vtheta}_{\pi(j)}, y_{\pi(j)}) \right) \, d\alpha.
\end{align}
Taking the expectation (over the randomness in the labels) in \eqref{def:aulc} yields the \emph{expected} AULC.
In the context of selective prediction, AULC can be interpreted as the task loss $\ell_{\rm{task}}$, as it quantifies the expected prediction error over varying levels of instance rejection. Moreover, we call $\ell^{\star}$ in \eqref{def:aulc} \emph{auxiliary} task loss.

\begin{proposition}\label{prop:arc-tu}
Let $\hat{\vtheta} \in \Delta_K$ be a (first-order) prediction and $\ell \in \mathcal{L}(\Delta_K, \mathcal{Y})$ . Then the expected AULC is minimized by ordering test instances in non-decreasing order of their (instance-wise) expected loss $\mathbb{E}_{y \sim \theta}\bigl[\ell(\hat{\vtheta},y)\bigr]$.
\end{proposition}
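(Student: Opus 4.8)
The plan is to reduce the statement to a single application of the rearrangement inequality, after rewriting the expected AULC as a weighted sum of the instance-wise expected losses with $\pi$-independent, strictly decreasing weights. First I would pull the expectation over the labels through the (finite) sum and the (bounded) integral in \eqref{def:aulc}. Since the $y_i\sim\theta_i$ are drawn independently and everything in sight is finite, linearity/Fubini gives
\[
\mathbb{E}[\mathrm{AULC}] \;=\; \int_0^1 \frac{1}{\floor{\alpha n}} \sum_{j=1}^{\floor{\alpha n}} \mathbb{E}_{y \sim \theta_{\pi(j)}}\bigl[\ell(\hat{\vtheta}_{\pi(j)}, y)\bigr]\, d\alpha \;=\; \int_0^1 \frac{1}{\floor{\alpha n}} \sum_{j=1}^{\floor{\alpha n}} c_{\pi(j)}\, d\alpha ,
\]
where $c_i \defeq L_\ell(\hat{\vtheta}_i,\theta_i) = \mathbb{E}_{y\sim\theta_i}[\ell(\hat{\vtheta}_i,y)]$ is the instance-wise expected (auxiliary) loss. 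Near $\alpha=0$ the integrand is read as $0$ (the empty average on $[0,1/n)$), so this is genuinely an integral over $[1/n,1]$.

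Next I would discretise the integral. The map $\alpha\mapsto\floor{\alpha n}$ is constant and equal to $m$ on each interval $[m/n,(m+1)/n)$ of length $1/n$, so the bracketed average is constant there and
\[
\mathbb{E}[\mathrm{AULC}] \;=\; \frac1n \sum_{m=1}^{n-1} \frac1m \sum_{j=1}^{m} c_{\pi(j)} \;=\; \frac1n \sum_{j=1}^{n} w_j\, c_{\pi(j)}, \qquad w_j \defeq \sum_{m=j}^{n-1}\frac1m ,
\]
after interchanging the two finite sums (with the convention that an empty sum is $0$, so $w_n=0$; the value $\floor{\alpha n}=n$ occurs only at the single point $\alpha=1$ and does not affect the integral — under a different endpoint convention $w_j$ merely gains a harmless additive $1/n$). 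The two key structural facts are that the weights $w_j$ are independent of the ordering $\pi$, and that they are strictly decreasing and nonnegative: $w_j - w_{j+1} = 1/j > 0$ for $1\le j\le n-1$, with $w_1 > w_2 > \cdots > w_{n-1} > w_n = 0$.

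Finally, minimising $\mathbb{E}[\mathrm{AULC}]$ over orderings is exactly minimising $\sum_{j=1}^n w_j c_{\pi(j)}$ over permutations $\pi$, with the weight vector $(w_j)$ fixed and strictly decreasing. By the rearrangement inequality, a sum of pairwise products of a decreasing sequence with a permuted sequence is minimised precisely when the permuted sequence is placed in the opposite (i.e.\ non-decreasing) order — intuitively, the smallest $c$-values must be paired with the largest weights — that is, when $c_{\pi(1)} \le c_{\pi(2)} \le \cdots \le c_{\pi(n)}$; when the $c_i$ are distinct this minimiser is unique. Unravelling $c_i = L_\ell(\hat{\vtheta}_i,\theta_i)$, this says that the expected AULC is minimised by ordering the test instances in non-decreasing order of their instance-wise expected loss $\mathbb{E}_{y\sim\theta_i}[\ell(\hat{\vtheta}_i,y)]$, which is the claim.

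I expect the only real obstacle to be bookkeeping rather than ideas: pinning down the endpoint conventions of the AULC integral (the $\floor{\alpha n}=0$ singularity and whether $m$ runs to $n-1$ or $n$, which changes $w_n$ but not the monotonicity of the weights) and citing the rearrangement inequality in exactly the ``decreasing weights $\Rightarrow$ oppositely-sorted minimiser'' form. The interchange of expectation with the integral and the swap of the two finite sums are routine. If one wishes to allow $\ell$ to take the value $+\infty$ (e.g.\ log-loss at a zero coordinate), one only needs to note that the $c_i$ still lie in $[0,+\infty]$ and that the obvious limiting version of the rearrangement inequality continues to apply.
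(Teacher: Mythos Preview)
Your proof is correct and follows essentially the same approach as the paper: pull the expectation inside, rewrite the expected AULC as a weighted sum $\sum_j w_j\, c_{\pi(j)}$ with $\pi$-independent decreasing weights, and apply the rearrangement inequality. Your treatment is in fact slightly more careful than the paper's---you discretise the integral exactly (the integrand is a step function) rather than writing it as a Riemann-sum ``approximation'', and you address the $\floor{\alpha n}=0$ singularity and the endpoint convention---but the mathematical content is identical.
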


Now, if $\hat{\vtheta} = \bar{\vtheta}$, then considering the expectation (with respect to the learner's belief $Q$) over $\mathbb{E}_{y \sim \vtheta}\bigl[\ell(\bar{\vtheta},y)\bigr]$ yields the measure of total uncertainty in \eqref{eq:tuau}. This leads to an important observation: In selective prediction, when determining the ordering of test instances (e.g., based on uncertainty measures), the most sensible strategy to minimize the expected AULC, as established in Proposition \ref{prop:arc-tu}, is to order them according to the (predicted) \emph{total} uncertainty with uncertainty loss $\ell$ given by $\ell^*$ in (\ref{def:aulc}). The proof of this proposition can be found in \cref{app:proofs}.

As an aside, let us note that loss-rejection curves (or, analogously, accuracy-rejection curves) are commonly used as a means to evaluate aleatoric and epistemic uncertainty measures, too, which means the curves are constructed for these measures as selection criteria \citep{eyke_new, saleLabelWise2024}. In light of our finding that total uncertainty is actually the right criterion, this practice may appear somewhat questionable, as it means that aleatoric and epistemic uncertainty measures are evaluated on a task they are actually not tailored to. 
 
\section{Empirical Results}\label{sec:results}
We present the experimental results with the following objectives in mind. First, we connect theory to practice by demonstrating our theoretical result in the practical selective prediction setting and showing that uncertainty measures should be adjusted to the downstream task. Second, we confirm that mutual information, a measure commonly used for Out-of-Distribution detection, does indeed perform strongly. 
Third, we show that the measure proposed by us consistently lead to superior performance on the active learning task. 
The code for the experiments is published in a Github repository\footnote{\url{https://github.com/pwhofman/proper-scoring-rule-uncertainty}}. In \cref{app:details} we present all experimental details.

\begin{figure*}[t!]
\centering
\begin{minipage}{.33\textwidth}
  \centering
  \includegraphics[width=.95\linewidth]{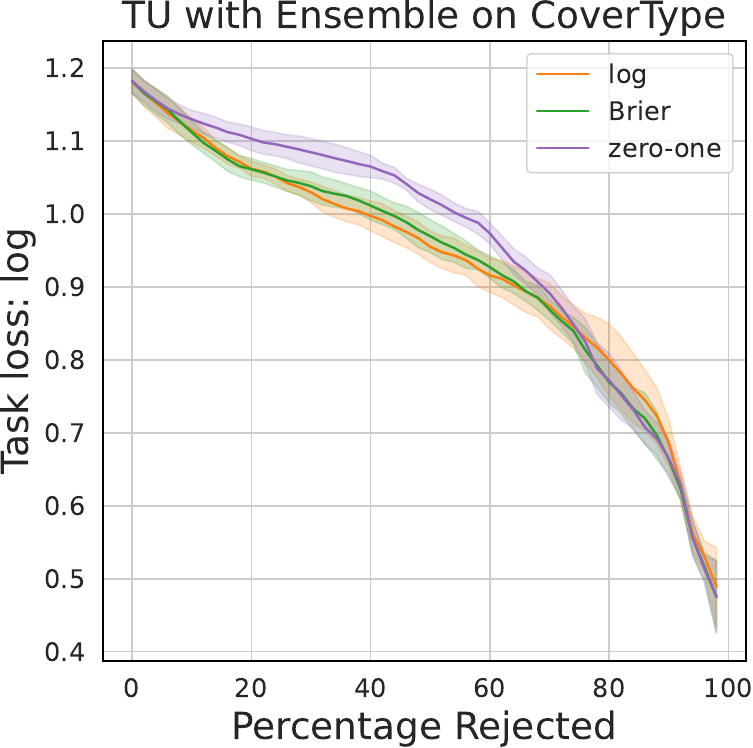}
\end{minipage}%
\begin{minipage}{.33\textwidth}
  \centering
  \includegraphics[width=.95\linewidth]{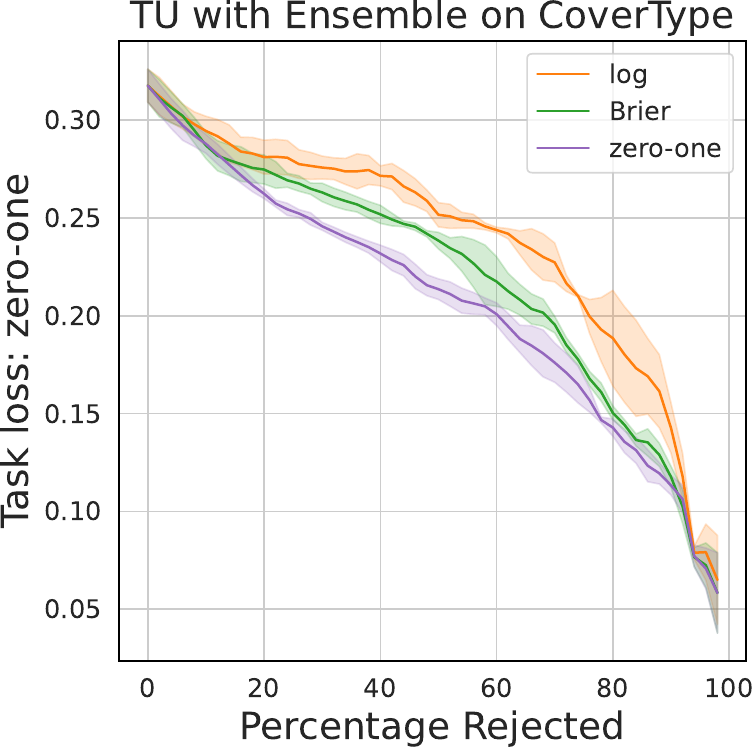}
\end{minipage}%
\begin{minipage}{.33\textwidth}
  \centering
  \includegraphics[width=.95\linewidth]{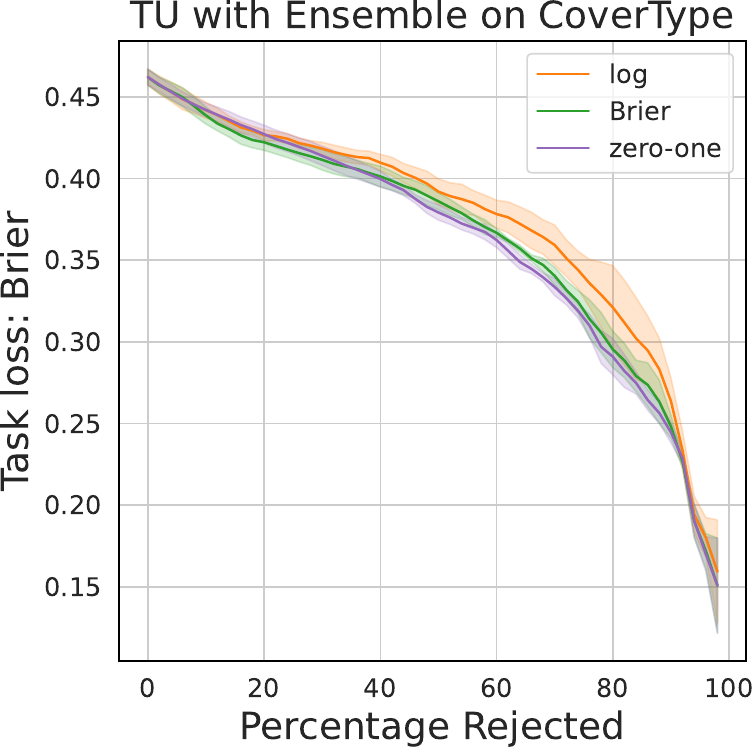}
\end{minipage}
\caption{\textbf{Selective Prediction} with different task losses using the total uncertainty component as the rejection criterion. The line shows the mean and the shaded area represents the standard deviation over three runs.}
\label{fig:selective-prediction}
\vspace{-0.5cm}
\end{figure*}

\subsection{Selective Prediction}
We perform the selective prediction task according to the approach laid out in \cref{sec:task-uncertainty}.
We fit a RandomForest, an ensemble of decision trees, on the CoverType dataset \citep{blackardCoverType1998}. In \cref{fig:selective-prediction} we show the results for the three different task losses using total uncertainty as the rejection criterion. 
This confirms our theoretical result in Section \ref{sec:sel}, namely that it is optimal to align the uncertainty loss with the task loss. The difference is particularly noticeable when the zero-one-loss is the underlying task loss, which is a typical loss in the selective prediction task \citep{nadeemARCs2009, geifmanSelectiveClassification2017}.
In \cref{app:results} we also present results using aleatoric and epistemic uncertainty as rejection criterion, highlighting that, as shown in \cref{sec:sel}, it is best to reject instances based on total uncertainty. 

Note that for probability distributions on two classes, the measures of total uncertainty induce the same ordering of the instances. Hence, for datasets where the uncertainty is limited to only a few classes, the differences between the measures may be minimal. This is the case for benchmark image datasets, such as CIFAR-10 \citep{krizhevsky2009learning}, the loss-rejection curves of which we plot in \cref{app:results} as an illustration.

\begin{table*}[t!]
\centering
\caption{\textbf{Out-of-Distribution detection} with CIFAR-10 as in-Distribution data based on epistemic uncertainty. The mean and standard deviation over three runs are reported. Best measures are in \textbf{bold}.}
\label{tab:ood}
\begin{tabularx}{\textwidth}{XX>{\centering\arraybackslash}X>{\centering\arraybackslash}X>{\centering\arraybackslash}X}
\toprule
Dataset                      & Method & log & Brier & zero-one \\ \midrule
\multirow{3}{*}{CIFAR-100} & Dropout & $ \textbf{0.829} \scriptstyle{\pm 0.000} $ & $ 0.822 \scriptstyle{\pm 0.000} $ & $ 0.702 \scriptstyle{\pm 0.001} $\\
                            & Ensemble & $ \textbf{0.860} \scriptstyle{\pm 0.001} $ & $ 0.852 \scriptstyle{\pm 0.002} $ & $ 0.762 \scriptstyle{\pm 0.002} $ \\
                            & Laplace & $ \textbf{0.845} \scriptstyle{\pm 0.002} $ & $ 0.836 \scriptstyle{\pm 0.001} $ & $ 0.808 \scriptstyle{\pm 0.001} $ \\\midrule
\multirow{3}{*}{Places365}  & Dropout     & $ \textbf{0.837} \scriptstyle{\pm 0.001} $ & $ 0.828 \scriptstyle{\pm 0.001} $ & $ 0.714 \scriptstyle{\pm 0.002} $ \\
                            & Ensemble & $ \textbf{0.856} \scriptstyle{\pm 0.002} $ & $ 0.846 \scriptstyle{\pm 0.002} $ & $ 0.758 \scriptstyle{\pm 0.005} $ \\
                            & Laplace & $ \textbf{0.863} \scriptstyle{\pm 0.003} $ & $ 0.850 \scriptstyle{\pm 0.003} $ & $ 0.825 \scriptstyle{\pm 0.004} $ \\\midrule
\multirow{3}{*}{SVHN}       & Dropout     & $ \textbf{0.835} \scriptstyle{\pm 0.000} $ & $ 0.830 \scriptstyle{\pm 0.000} $ & $ 0.701 \scriptstyle{\pm 0.002} $ \\
                            & Ensemble & $ \textbf{0.872} \scriptstyle{\pm 0.005} $ & $ 0.868 \scriptstyle{\pm 0.005} $ & $ 0.776 \scriptstyle{\pm 0.007} $ \\
                            & Laplace & $ \textbf{0.865} \scriptstyle{\pm 0.005} $ & $ 0.856 \scriptstyle{\pm 0.004} $ & $ 0.826 \scriptstyle{\pm 0.006} $ \\ \bottomrule 
\end{tabularx}
\vspace{-0.5cm}
\end{table*}

\subsection{Out-of-Distribution Detection}
Out-of-Distribution (OoD) detection is another downstream tasks that is usually used to evaluate the representation and quantification of epistemic uncertainty. The model is trained on a dataset, which is referred to as the in-Distribution (iD) data, and then given instances from another dataset referred to as the OoD dataset. The epistemic uncertainty is computed for instances from both datasets. It is expected that a model in combination with a good uncertainty measure will exhibit higher epistemic uncertainty for the OoD data, which it has not encountered before. We compute the AUROC to show how well the iD and OoD data are separated using a given measure.

We train a ResNet18 \citep{heDeepResidual2016} model on the CIFAR-10 dataset. The second-order distribution is approximated by means of an ensemble \citep{lakshminarayananDeep2017}, Dropout \citep{galDropout2016}, and Laplace approximation \citep{daxbergerLaplace2021}.
\cref{tab:ood} shows the performance of the three different measures of epistemic uncertainty across different OoD datasets with CIFAR-10 as the in-Distribution dataset. 
This demonstrates that epistemic uncertainty measures instantiated with the log-loss -- specifically, mutual information -- perform best. This is particularly notable, as this measure is widely used for the OoD task with second-order uncertainty representations \citep{muscanyiBenchmarkingUncertainties2024}. 
In \cref{app:ood} we provide additional results using the ImageNet \citep{dengImageNet2009} and Food101 \citep{bossardFood2014} datasets as in-Distribution datasets which further emphasize that the log-based measure performs best.

\subsection{Active Learning}
We use active learning as an additional task to highlight the difference between the different measures of uncertainty. In active learning the goal is to achieve good performance with as little data as possible. A learner is trained on an initial (small) pool of labeled data and can then iteratively query additional data, to be labeled by an oracle, from a (typically large) pool of unlabeled data. There exist many different strategies to query the most informative instances from the unlabeled pool, many of which are based on epistemic uncertainty \citep{nguyenEpistemic2019, kirschBatch2019, margrafALPBench2024}. Here, we also sample based on epistemic uncertainty using the epistemic uncertainty measures in \cref{tab:measures}. 

We use MNIST \citep{leCunGradientBased1998} and FashionMNIST \citep{xiao2017fashion} along with multi-class datasets from the MedMNIST collection \citep{yangMedmnistv22023}. This collection contains both color and grayscale images. For the color images a small convolutional neural network is used based on the LeNet architecture \citep{leCunGradientBased1998}, for the grayscale images we use a small neural network with only fully-connected layers. The approximate second-order distribution is generated using dropout as is common in active learning tasks with images \citep{galDeepBayesian2017, kirschBatch2019}. 
\cref{fig:active-learning} shows the zero-one-loss of the models on the test set against the number of instances used for training. For all datasets provided here, epistemic uncertainty sampling using the measure based on the zero-one-loss gives the best performance. Experiments with additional datasets are provided in \cref{app:active-learning}.

An explanation for the good performance of the zero-one-loss may be found by considering the informativeness of unlabeled instances. The unlabeled instances that provide the most information are the ones for which the model has (epistemic) uncertainty regarding the ground-truth label, as this information is only revealed upon sampling an instance. The instances for which this uncertainty does not exist, i.e. the instances for which all predicted first-order distributions ``agree'' on the ground-truth class, do not provide new information with respect to the label. As discussed in \cref{sec:loss-based}, the epistemic uncertainty measure of the zero-one-loss quantifies exactly this uncertainty regarding the ground-truth class, whereas the other measures may still express epistemic uncertainty when there is no uncertainty regarding the ground-truth class.

\textbf{Adjusting Measures to Prediction tasks.} 
These experiments illustrate the importance of adjusting the uncertainty loss to the task loss,
confirming our formal result from \cref{sec:sel}. Our experiments with selective prediction in \cref{fig:selective-prediction} also show that this task should be tackled with total uncertainty as the rejection criterion. 
The difference in performance of the uncertainty measures across the tasks of Out-of-Distribution detection and active learning further highlights the importance of adjusting the uncertainty measures to the downstream task. As seen in \cref{tab:ood} the log-loss-based measures provide the most reliable uncertainty measures for OoD detection, whereas for the active learning (see \cref{fig:active-learning}) the zero-one-loss epistemic uncertainty performs consistently better than other measures.

\begin{figure*}[t!]
\centering
\begin{minipage}{.33\textwidth}
  \centering
  \includegraphics[width=.95\linewidth]{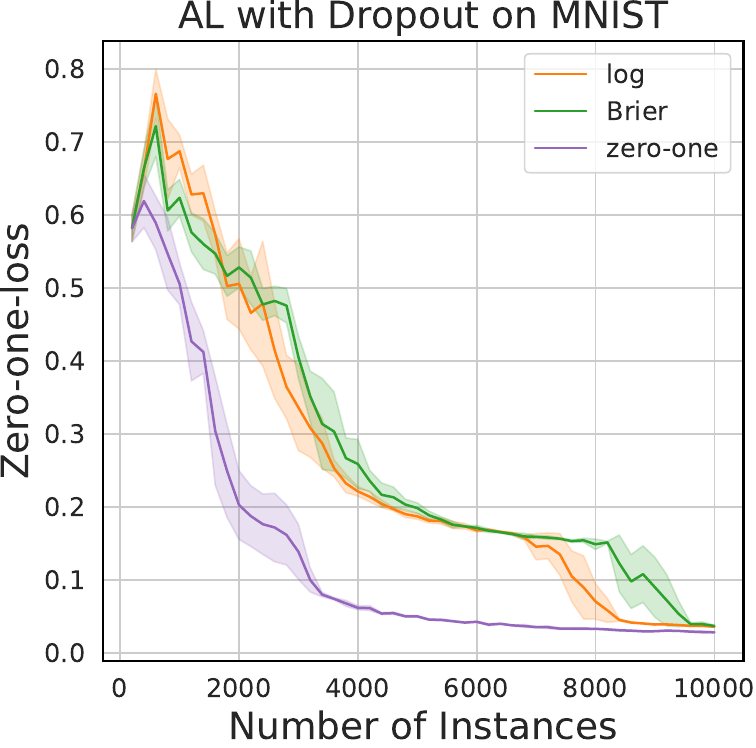}
\end{minipage}%
\begin{minipage}{.33\textwidth}
  \centering
  \includegraphics[width=.95\linewidth]{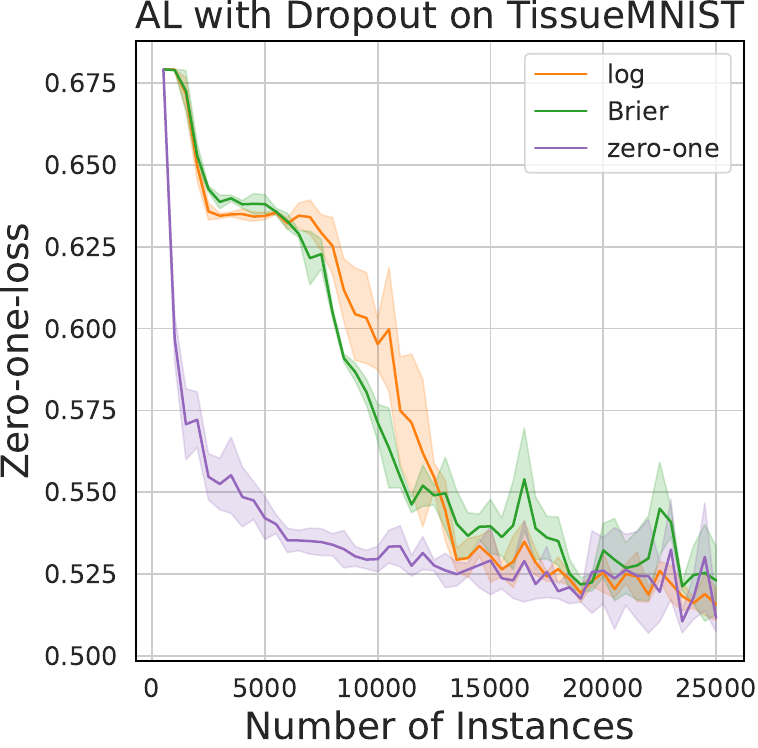}
\end{minipage}%
\begin{minipage}{.33\textwidth}
  \centering
  \includegraphics[width=.95\linewidth]{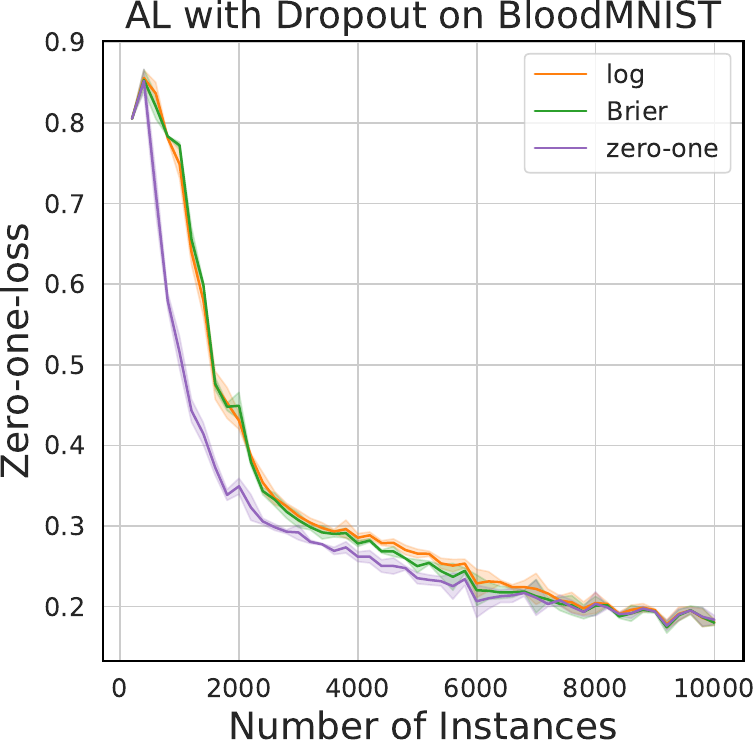}
\end{minipage}
\caption{\textbf{Active Learning} with different datasets using the epistemic uncertainty component to query new instances. The model is evaluated using the zero-one-loss on the test instances. The line shows the mean and the shaded area represents the standard deviation over three runs.}
\label{fig:active-learning}
\vspace{-0.5cm}
\end{figure*}

\section{Conclusion}\label{sec:conclusion}
In this paper, we introduced a new framework for uncertainty quantification on the basis of proper scoring rules, a specific type of loss function that facilitates a natural decomposition of total uncertainty into its aleatoric and epistemic parts. Thus, the basic idea is to define uncertainty in terms of expected loss, and the nature of the loss determines the nature of the uncertainty measure. We instantiated our method with commonly-used (strictly) proper scoring rules and showed the importance of adjusting the resulting uncertainty measures to the predictive task at hand.
Through empirical evaluation we have shown that the measures generated by our decomposition offer a flexible way to perform selective prediction by adapting the uncertainty loss to the task loss. We confirmed the strong performance of the de facto standard entropy measures on Out-of-Distribution detection and presented a new measure that demonstrates consistently superior performance in active learning. Our findings emphasize that there is no universal measure of uncertainty that performs best on all tasks. We find that measures performing well on the Out-of-Distribution task may not work best for the active learning task. The choice of a specific measure can be supported by considering the task loss in the case of selective prediction or by focusing on the uncertainty that is relevant for the particular task in the case of OoD and active learning. 

\textbf{Limitations and Future Work.} In this work we have assumed that we are given a second-order distribution without explicitly focusing on how it was learned. The learning process involves an additional training loss that we have not discussed here. Future work could explore the connection between the (proper) loss used during training of model and the performance of uncertainty measures on specific downstream tasks.
Given such a trained model that predicts a second-order distribution, the characteristics of the different second-order distributions predicted by different uncertainty representation method also plays a significant role. Dropout will result in a different uncertainty representation than Laplace approximation. The interaction between the second-order distribution and the uncertainty measures presents a promising avenue for future work.
Lastly, we have explicitly focused our analysis on the selective prediction task here. However, other downstream tasks, such as Out-of-Distribution detection and active learning, should also be explored further to investigate why certain uncertainty losses and components result in better downstream performance.

\textbf{Broader Impacts.} This work contributes to the development of more reliable machine learning models by improving uncertainty quantification. We do not foresee any direct negative broader impacts arising from this work.

\begin{ack}
Yusuf Sale was supported by the DAAD program Konrad Zuse Schools of Excellence in Artificial Intelligence, sponsored by the Federal Ministry of Education and Research.
\end{ack}

\bibliographystyle{abbrvnat}
\bibliography{references}
\clearpage


\appendix
\section{Derivations}\label{app:derivations}
In this section, we derive total, aleatoric and epistemic uncertainty measures instantiated with \emph{log}, \emph{Brier}, \emph{zero-one}, and \textit{spherical} loss, respectively. To begin, recall the following definitions:
\begin{align}
    \TU(Q) &= \bE_{\vtheta \sim Q}[L_\ell(\bma, \vtheta)] = \bE_{\vtheta \sim Q}[\bE_{y \sim \vtheta}[\ell(\bma, y)]] \label{eq:tu-app} \\[0.2cm]
    \AU(Q) &= \bE_{\vtheta \sim Q}[H_\ell(\vtheta)] = \bE_{\vtheta \sim Q}[L_\ell(\vtheta, \vtheta)] = \bE_{\vtheta \sim Q}[\bE_{y \sim \vtheta}[\ell(\vtheta, y)]] \label{eq:au-app} \\[0.2cm]
    \EU(Q) &= \bE_{\vtheta \sim Q}[D_\ell(\bma, \vtheta)] = \bE_{\vtheta \sim Q}[L_\ell(\bma, \vtheta) - L_\ell(\vtheta, \vtheta)] = \bE_{\vtheta \sim Q}[\bE_{y \sim \vtheta}[\ell(\bma, y) - \ell(\vtheta, y)]]
    \label{eq:eu-app}
\end{align}

\subsection{Log-loss}
With $\ell(\bma, y) = -\log(\bma_y)$,
\begin{align*}
    \TU(Q) &= \bE_{\vtheta \sim Q}[\bE_{y \sim \vtheta}[-\log(\btheta_y)]] \\
    &= \bE_{\vtheta \sim Q}[\sumK \theta_k(-\log(\btheta_k))] \\
    &= \sumK \bar{\theta}_k(-\log(\bar{\theta}_k)) \\
    &= S(\bma), \\ \\
    \AU(Q) &= \bE_{\vtheta \sim Q}[\bE_{y \sim \vtheta}[-\log(\theta_y)]] \\
    &= \bE_{\vtheta \sim Q}[S(\vtheta)], \\ \\
    \EU(Q) &= \bE_{\vtheta \sim Q}[\bE_{y \sim \vtheta}[-\log(\btheta_y) + \log(\theta_y)]] \\
    &= \bE_{\vtheta \sim Q}[\KL(\vtheta \parallel \bma)].
\end{align*}

\subsection{Brier-loss}
With $\ell(\bma,y) = \sumK(\bar{\theta}_k - \llbracket k = y\rrbracket)^2 = - 2 \bar{\theta}_y + \sumK \bar{\theta}_k^2 + 1$, 
\begin{align*}
    \TU(Q) &= \bE_{\vtheta \sim Q}[\bE_{y \sim \vtheta}[-2\btheta_y + \sumK\btheta_k^2 + 1]] \\
    &= \bE_{\vtheta \sim Q}[\sumK\theta_k(-2\btheta_k)] + \sumK\btheta_k^2 + 1 \\
    &= \sumK\btheta_k(-2\btheta_k) + \sumK\btheta_k^2 + 1 \\
    &= \sumK-2\btheta_k^2 + \sumK\btheta_k^2 + 1 \\
    &= 1 - \sumK\btheta_k^2, \\ \\
    \AU(p) &= \bE_{\vtheta \sim Q}[\bE_{y \sim \vtheta}[-2\theta_y + \sumK\theta_k^2 + 1]] \\
    &= \bE_{\vtheta \sim Q}[\sumK\theta_k(-2\theta_k) + \sumK\theta_k^2 + 1] \\
    &= \bE_{\vtheta \sim Q}[\sumK-2\theta_k^2 + \sumK\theta_k^2 + 1)] \\
    &= \bE_{\vtheta \sim Q}[1 - \sumK\theta_k^2], \\ \\
    \EU(p) &= \bE_{\vtheta \sim Q}[\bE_{y \sim \vtheta}[-2\btheta_y + \sumK\btheta_k^2 + 1 + 2\theta_y - \sumK\theta_k^2 - 1]] \\
    &= \bE_{\vtheta \sim Q}[\sumK \theta_k(-2\btheta_k + 2\theta_k) + \sumK\btheta_k^2 - \sumK\theta_k^2] \\
    &= \bE_{\vtheta \sim Q}[\sumK (-2\btheta_k\theta_k + 2\theta_k^2) + \sumK\btheta_k^2 - \sumK\theta_k^2] \\
    &= \bE_{\vtheta \sim Q}[\sumK (\btheta_k^2 -2\btheta_k\theta_k + \theta_k^2)] \\
    &= \bE_{\vtheta \sim Q}[\sumK (\btheta_k^2 - \theta_k^2)]. \\
\end{align*}

\subsection{Zero-one-loss}
With $\ell(\bma,y) = 1-\llbracket \argmax_k \btheta_k = y\rrbracket$, 
\begin{align*}
    \TU(Q) &= \bE_{\vtheta \sim Q}[\bE_{y \sim \vtheta}[1 - \llbracket \argmax_k \btheta_k = y\rrbracket] \\
    &= \bE_{\vtheta \sim Q}[\sumKK \theta_{k'}(1 - \llbracket \argmax_k \btheta_k = k'\rrbracket) \\
    &= \sumKK \bar{\theta}_{k'}(1 - \llbracket \argmax_k \btheta_k = k'\rrbracket) \\
    &= 1 - \max_k \btheta_k, \\ \\
    \AU(Q) &= \bE_{\vtheta \sim Q}[\bE_{y \sim \vtheta}[1 - \llbracket \argmax_k \theta_k = y\rrbracket] \\
    &= \bE_{\vtheta \sim Q}[1 - \max_k \theta_k], \\ \\
    \EU(Q) &= \bE_{\vtheta \sim Q}[\bE_{y \sim \vtheta}[1 - \llbracket \argmax_k \btheta_k = y\rrbracket] - 1 + \llbracket \argmax_k \theta_k = y\rrbracket] \\
    &= \bE_{\vtheta \sim Q}[\bE_{y \sim \vtheta}[\llbracket \argmax_k \theta_k = y\rrbracket] - \llbracket \argmax_k \btheta_k = y\rrbracket] \\
    & = \bE_{\vtheta \sim Q}[\max_k \theta_k - \theta_{\argmax_k \btheta_k}].
\end{align*}

\subsection{Spherical-loss}
With $\ell(\bma,y) = 1 - \frac{\htheta_y}{\Vert \bma \Vert_2}$,
\begin{align*}
    \TU(Q) &= \bE_{\vtheta \sim Q}[\bE_{y \sim \vtheta}[1 - \frac{\btheta_k}{\Vert \bma \Vert_2}]] \\
    &= \bE_{\vtheta \sim Q}[1 - \frac{\sumK\theta_k\btheta_k}{\Vert \bma \Vert_2}] \\
    &= 1 - \frac{\sumK\btheta^2}{\Vert \bma \Vert_2} \\
    &= 1 - \Vert \bma \Vert_2, \\ \\
    \AU(Q) &= \bE_{\vtheta \sim Q}[\bE_{y \sim \vtheta}[1 - \frac{\theta_k}{\Vert \vtheta \Vert_2}]] \\
    &= \bE_{\vtheta \sim Q}[1 - \frac{\sumK\theta_k^2}{\Vert \vtheta \Vert_2}] \\
    &= \bE_{\vtheta \sim Q}[1 - \Vert \vtheta \Vert_2], \\ \\
    \EU(Q) &= \bE_{\vtheta \sim Q}[\bE_{y \sim \vtheta}[1 - \frac{\btheta_k}{\Vert \bma \Vert_2} - 1 + \frac{\theta_k}{\Vert \vtheta \Vert_2}]] \\
    &= \bE_{\vtheta \sim Q}[\bE_{y \sim \vtheta}[\frac{\theta_k}{\Vert \vtheta \Vert_2} - \frac{\btheta_k}{\Vert \bma \Vert_2}]] \\
    &= \bE_{\vtheta \sim Q}[\frac{\sumK\theta_k^2}{\Vert \vtheta \Vert_2} - \frac{\sumK\btheta_k\theta_k}{\Vert \bma \Vert_2}] \\
    &= \bE_{\vtheta \sim Q}[\Vert \vtheta \Vert_2 - \frac{\sumK\btheta_k\theta_k}{\Vert \bma \Vert_2}].
\end{align*}

\clearpage

\section{Proofs}\label{app:proofs}
\begin{proof}[Proof of proposition \ref{prop:arc-tu}]
For $\alpha \in [0,1]$, the area under the loss-rejection curve (AULC) is defined as  
\begin{align*}
    \rm{AULC} = \int_{0}^{1} \left( \frac{1}{\floor{\alpha n}} \sum_{j = 1}^{\floor{\alpha n}} \ell(\hat{\theta}_{\pi(j)}, y_{\pi(j)}) \right) \, d\alpha.
\end{align*}
Define $c_{\pi(j)} \coloneqq \mathbb{E}\bigl[\ell(\hat{\theta}_{\pi(j)},y_{\pi(j)})\bigr] $. Then, the \emph{expected} area under the loss-rejection curve is given by
\begin{align}
\mathbb{E}[\mathrm{AULC}]
&=\int_{0}^{1} \left( \frac{1}{\lfloor \alpha n \rfloor} \sum_{j=1}^{\lfloor \alpha n \rfloor} \mathbb{E}\bigl[\ell(\hat{\theta}_{\pi(j)},y_{\pi(j)})\bigr] \right) d\alpha \nonumber =\int_{0}^{1} \left( \frac{1}{\lfloor \alpha n \rfloor} \sum_{j=1}^{\lfloor \alpha n \rfloor} c_{\pi(j)} \right) d\alpha.
\end{align}
Consequently, we can approximate the integral by a Riemann sum with step $\Delta \alpha=\frac{1}{n}$:
\begin{align*}
\mathbb{E}[\mathrm{AULC}]
\approx \frac{1}{n} \underbrace{\sum_{k=1}^{n}\left( \frac{1}{k} \sum_{j=1}^{k} c_{\pi(j)} \right)}_{\eqqcolon S({\pi}) }.
\end{align*}
Then, interchanging the order of summation yields
\begin{align*}
S(\pi) =\sum_{k=1}^{n}\sum_{j=1}^{k}\frac{1}{k}\,c_{\pi(j)} =\sum_{j=1}^{n}c_{\pi(j)}\sum_{k=j}^{n}\frac{1}{k}.
\end{align*}
With weights $w_j=\sum_{k=j}^{n}\frac{1}{k}$ we finally get 
$ S(\pi)=\sum_{j=1}^{n}w_j\,c_{\pi(j)}$. Since $w_1\ge w_2\ge\cdots\ge w_n>0$, the rearrangement inequality implies that the sum $\sum_{j=1}^{n}w_j\,c_{\pi(j)}$ is minimized when $ c_{\pi(1)}\le c_{\pi(2)}\le\cdots\le c_{\pi(n)}$. 

This completes the proof.
\end{proof}

\clearpage

\section{Experimental Details}\label{app:details}
In the following, we will provide additional details regarding the experimental setup. We split this into details about the models and training setup, the uncertainty methods and how they are applied, and details about the downstream tasks. The code is written in Python 3.10.12 and relies heavily on PyTorch \citep{pytorch2019}.

\subsection{Compute Resources}
The experiments were conducted using the compute resources in \cref{tab:compute}. The presented results were computed in approximately 50 GPU hours and 10 additional CPU hours.

\begin{table}[ht] 
\centering
\caption{\textbf{Compute Resources.}}
\label{tab:compute}
\begin{tabularx}{0.85\textwidth}{Xl}
\toprule
Resource & Details \\ \midrule
GPU      & 2x NVIDIA A40 48GB GDDR \\
CPU      & AMD EPYC MILAN 7413 24 Cores / 48 Threads \\
RAM      & 128GB DDR4-3200MHz ECC DIMM \\
Storage  & 2x 480GB Samsung Datacenter SSD PM893\\
\bottomrule
\end{tabularx}
\end{table}

\subsection{Datasets}
\begin{table}[h]
\caption{\textbf{Datasets with references and licenses.}}
\label{tab:datasets}
\centering
\begin{tabularx}{0.85\textwidth}{llX}
\toprule
Dataset    & Reference & License \\\midrule
CoverType  & \citep{blackardCoverType1998} & CC BY. \\
Poker Hand & \citep{cattralPokerHand2006} & CC BY. \\ 
CIFAR-10   & \cite{krizhevsky2009learning} & Unknown. \\
CIFAR-100  & \citep{krizhevsky2009learning}    & Unknown.        \\
Places365     & \citep{zhouPlaces2018}          & CC BY. \\
SVHN       & \citep{netzerReading2011}          & Non-commercial use.         \\
ImageNet   & \citep{dengImageNet2009}          & Non-commerical research/educational use.        \\
ImageNet-O & \citep{hendrycksNatural2021}          & MIT License.        \\
Food101    & \citep{bossardFood2014}          & Unknown.        \\
MNIST & \citep{leCunGradientBased1998} & CC BY. \\
FashionMNIST & \citep{xiao2017fashion} & MIT License. \\
MedMNIST & \citep{yangMedmnistv22023} & CC BY. \\ \bottomrule
\end{tabularx}
\end{table}

Table \ref{tab:datasets} lists all datasets that we use to generate the results. We use the dedicated train-test split for all datasets when available. The CoverType and Poker Hand datasets do not have a dedicated train-test split, hence we use a $70\% - 30\%$ split.
During training, CIFAR-10 is normalized using the training set mean and standard deviation per channel, and random cropping and horizontal flipping is applied. For Out-of-Distribution all iD and OoD instances undergo the same transformations.

\subsection{Models}
We train the following models.

\paragraph{Random Forest.}
The RandomForest is fit on CoverType using 20 trees and a maximum depth of 5 and on Poker Hand with 20 trees and a maximum depth of 20. For the remaining hyper-parameters, we use the defaults provided by sklearn \citep{scikit-learn}. 

\paragraph{Multilayer Perceptron.}
The Multilayer Perceptron (MLP) consists of an input layer with 784 features with ReLU activations followed by a hidden layer with 100 features and ReLU activation, and an output layer whose size corresponds to the number of classes after which a softmax function is applied.
\paragraph{Convolutional Neural Network.} The Convolutional Neural Network (CNN) is based on the LeNet5 architecture \citep{leCunGradientBased1998}. It takes a three-channel input and consists of two convolutional layers followed by two fully-connected layers. The first convolutional layer has 32 filters of size 5x5 and the second has 64 filters of size 5x5. Both layers are followed by a 2x2 max-pooling operation and a ReLU activation. The feature maps are flattened and passed to the first fully-connected layer with 800 features and ReLU activation. Finally, the size of the output layer depends on the number of classes and applies a softmax function.

The training hyper-parameters of the models depend on the dataset and are listed in \cref{tab:alconfig}.

\paragraph{ResNet.} We use the ResNet18 \citep{heDeepResidual2016} implementation from \url{https://github.com/kuangliu/pytorch-cifar} for the CIFAR-10 experiments. The ResNets are trained for 100 epochs using stochastic gradient descent with a learning rate of 0.001, weight decay at 5e-4 and momentum at 0.9. The cosine annealing learning rate scheduler \citep{loshchilovStochasticGradient2017} is used.

We use the following pre-trained models.
\paragraph{EfficientNet.} For the ImageNet experiments, we use the EfficientNetV2S implementation from PyTorch which was pre-trained on ImageNet.

\paragraph{VisionTransformer.} We use a VisionTransformer for the Food101 experiments. It was pre-trained on Imagenet21K \citep{ridnikImagenet21K2021} and fine-tuned on Food101. It was downloaded from Hugging Face \url{https://huggingface.co/nateraw/vit-base-food101}.

\subsection{Uncertainty Representations}
The following methods are used to enable the models to represent their uncertainty using second-order distributions or approximations thereof.
\paragraph{Dropout.}
A Dropout layer before the final layer using the default probability $0.5$ from PyTorch is used. The other Dropout layers are turned off during evaluation.
\paragraph{Laplace Approximation.}
We use the Laplace package \citep{daxbergerLaplace2021} to fit the Laplace approximation on the last layer using the Kronecker-factored approximate curvature approximation, which are the default settings for this package. We obtain samples of the posterior by Monte Carlo sampling.
\paragraph{Deep Ensembles.} The deep ensemble is constructed by training 5 similar neural networks, relying on the randomness of the initialization and stochastic gradient descent to get diverse predictions \citep{lakshminarayananDeep2017}. 

\subsection{Downstream Tasks}
All tasks are run three times and for each run a new model is trained (only exception being pre-trained Dropout and Laplace models) and a random subset of the test data is sampled. For the downstream tasks, given an instance, we sample 20 conditional distributions in order to compute the total, aleatoric and epistemic uncertainty. 
\paragraph{Selective Prediction.}
The selective prediction tasks are done using 10,000 instances that are randomly sampled from the dedicated test split of the datasets. 
\paragraph{Out-of-Distribution Detection.}
For the Out-of-Distribution Detection task we also sample 10,000 instances from the test sets of the respective datasets, if possible. ImageNet-O has only 2,000 instances, thus we use all these instances and also sample 2,000 instances from the in-Distribution dataset. 
\paragraph{Active Learning.}
Active learning is performed by starting with training on only the initial instances. The model then uses epistemic uncertainty to sample new instances with a certain query budget. After this, the model is trained again and the performance on the dedicated test set of the respective dataset is evaluated. This process is repeated for 50 iterations. The settings used for the different datasets are shown in \cref{tab:alconfig}.  

\begin{table}[ht] 
\centering
\caption{\textbf{Setting for the active learning experiments.}}
\label{tab:alconfig}
\begin{tabularx}{0.85\textwidth}{X>{\centering\arraybackslash}X>{\centering\arraybackslash}X>{\centering\arraybackslash}X>{\centering\arraybackslash}X}
\toprule
Dataset & Initial Instances & Query Budget & Learning Rate & Epochs \\ \midrule
MNIST   & 100               & 100          & 0.01          & 50     \\
TissueMNIST & 500 & 500 & 0.01 & 50 \\
BloodMNIST & 200 & 200 & 0.001 & 100 \\
FashionMNIST  & 500               & 500          & 0.01          & 50     \\ 
OrganCMNIST & 200 & 200 & 0.01 & 50 \\
PathMNIST & 1000 & 1000 & 0.001 & 100 \\
\bottomrule
\end{tabularx}
\end{table}

\clearpage

\section{Additional Results}\label{app:results}
\subsection{Selective Prediction}\label{app:selective-prediction}
We present the selective prediction results for the CoverType dataset with total and aleatoric uncertainty as the rejection criteria in 
\cref{fig:app-sp-au-eu-covtype}, confirming that the best performance is obtained by using total uncertainty as the rejection criterion.

\begin{figure*}[ht]
\centering
\begin{minipage}{.33\textwidth}
  \centering
  \includegraphics[width=.9\linewidth]{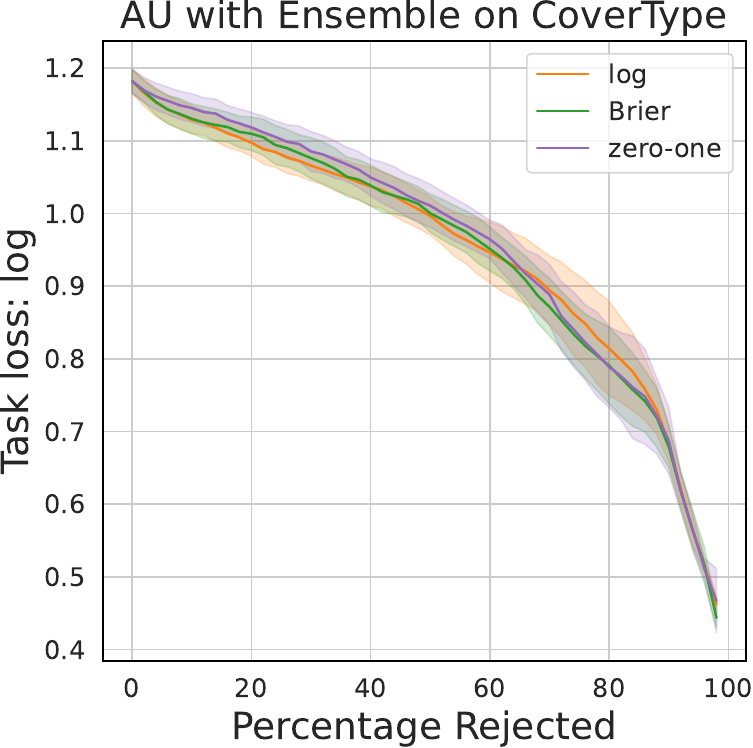} \\ [0.5cm]
  \includegraphics[width=.9\linewidth]{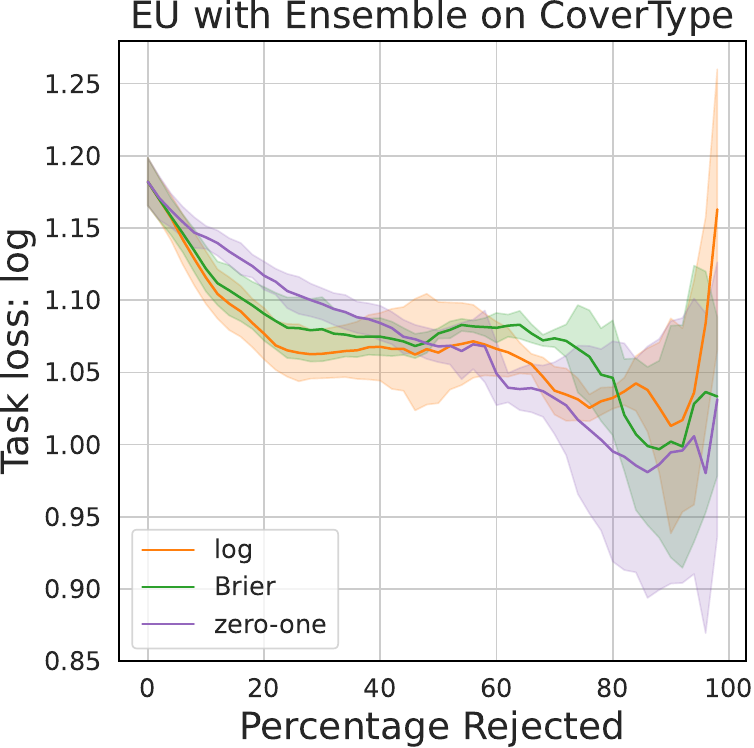}
\end{minipage}%
\begin{minipage}{.33\textwidth}
  \centering
  \includegraphics[width=.9\linewidth]{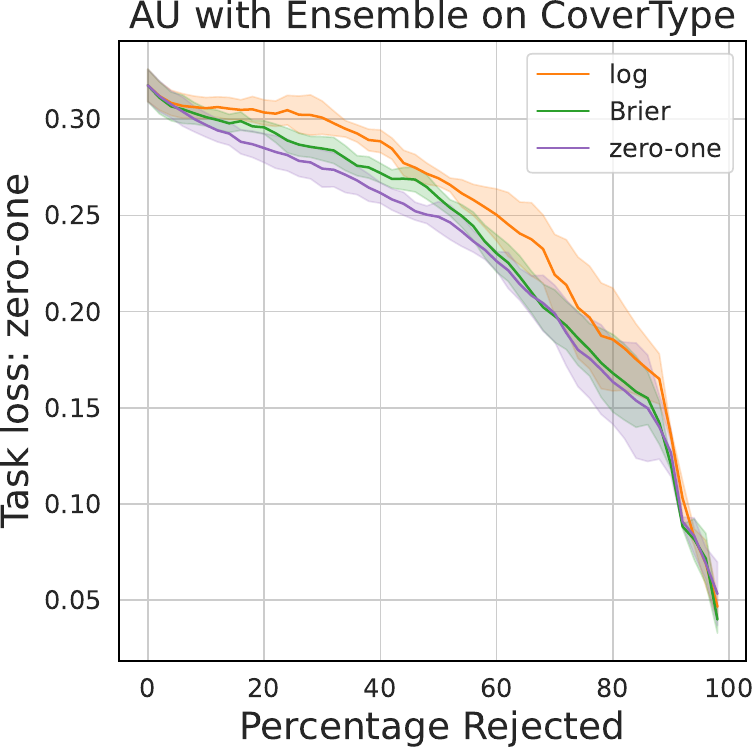} \\ [0.5cm]
  \includegraphics[width=.9\linewidth]{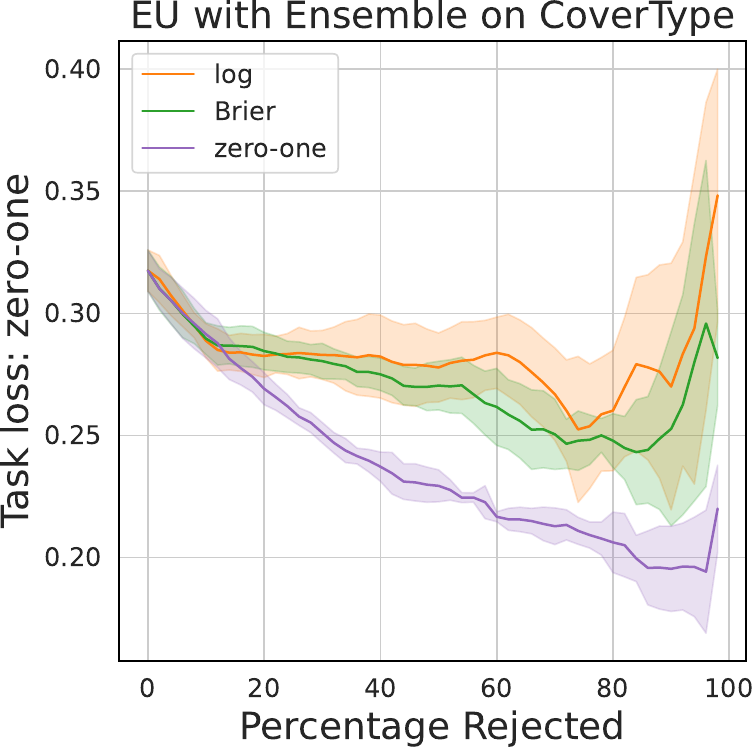}
\end{minipage}%
\begin{minipage}{.33\textwidth}
  \centering
  \includegraphics[width=.9\linewidth]{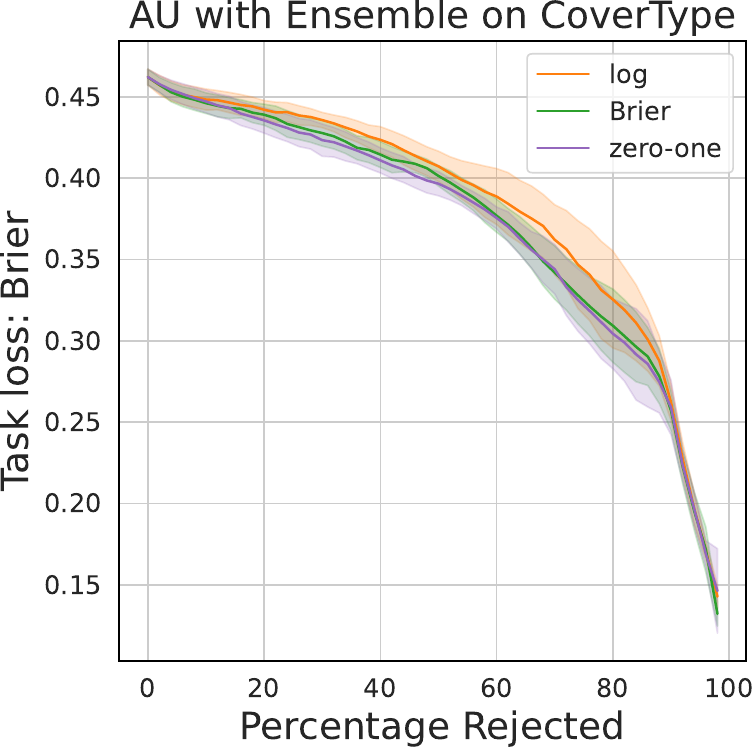} \\ [0.5cm]
  \includegraphics[width=.9\linewidth]{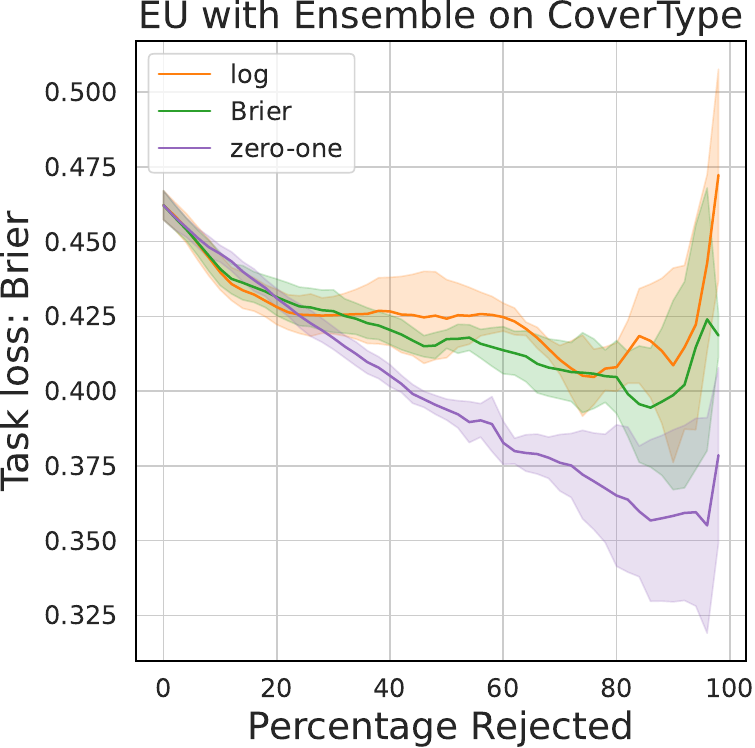}
\end{minipage}
\caption{\textbf{Selective Prediction} with different task losses using the aleatoric uncertainty (top row) and epistemic uncertainty (bottom row) component as the rejection criterion. The line shows the mean and the shaded area represents the standard deviation over three runs.}
\label{fig:app-sp-au-eu-covtype}
\end{figure*}

We also present selective prediction results for the Poker Hand dataset with total, aleatoric, and epistemic in \cref{fig:app-sp-tu-au-eu-pokerhand}. These results show the same behavior as for the CoverType dataset.

\begin{figure*}[ht]
\centering
\begin{minipage}{.33\textwidth}
  \centering
  \includegraphics[width=.9\linewidth]{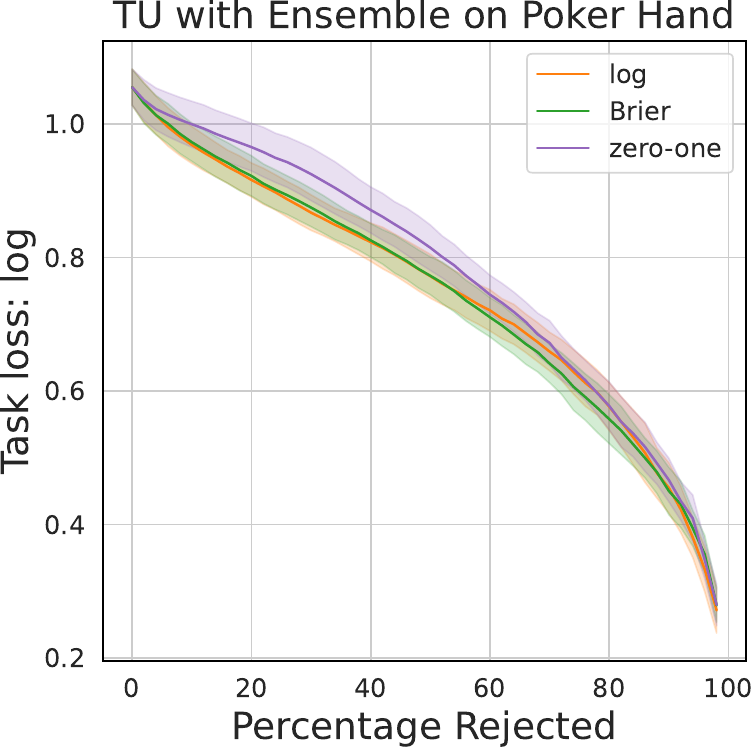} \\ [0.5cm]
  \includegraphics[width=.9\linewidth]{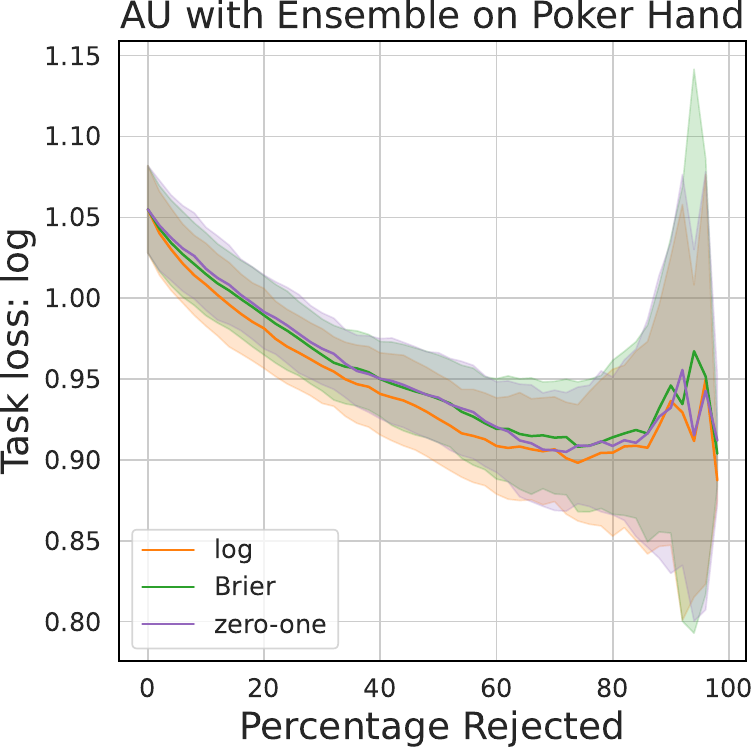} \\ [0.5cm]
  \includegraphics[width=.9\linewidth]{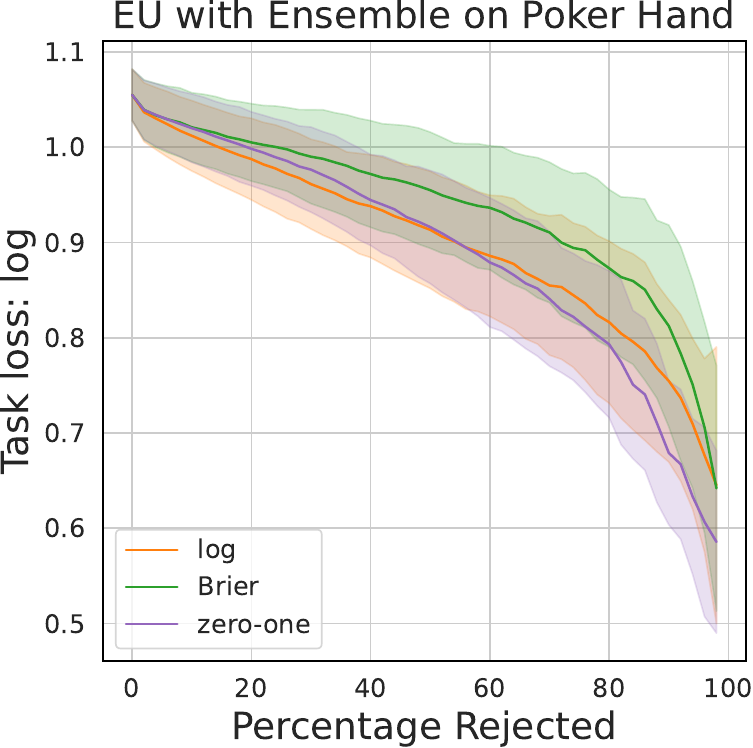}
\end{minipage}%
\begin{minipage}{.33\textwidth}
  \centering
  \includegraphics[width=.9\linewidth]{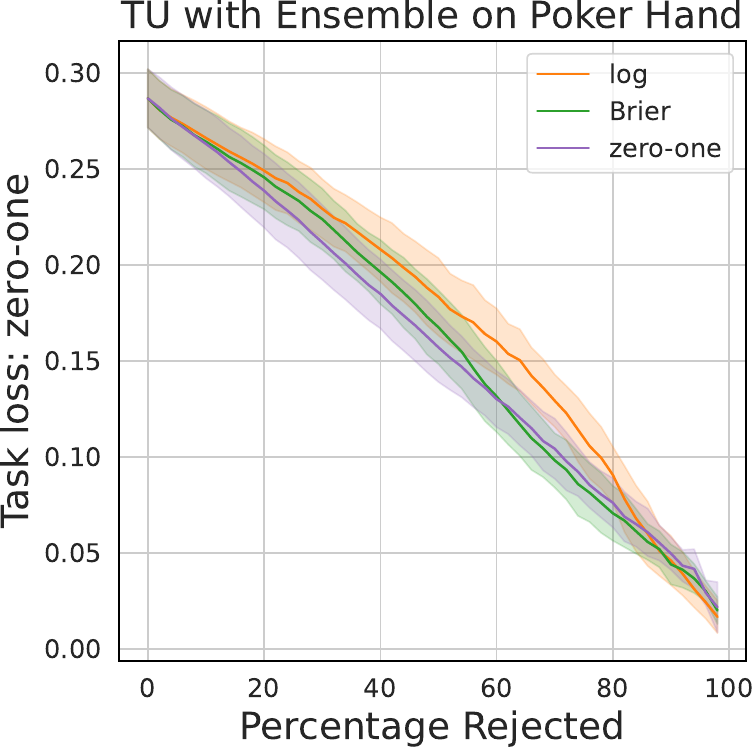} \\ [0.5cm]
  \includegraphics[width=.9\linewidth]{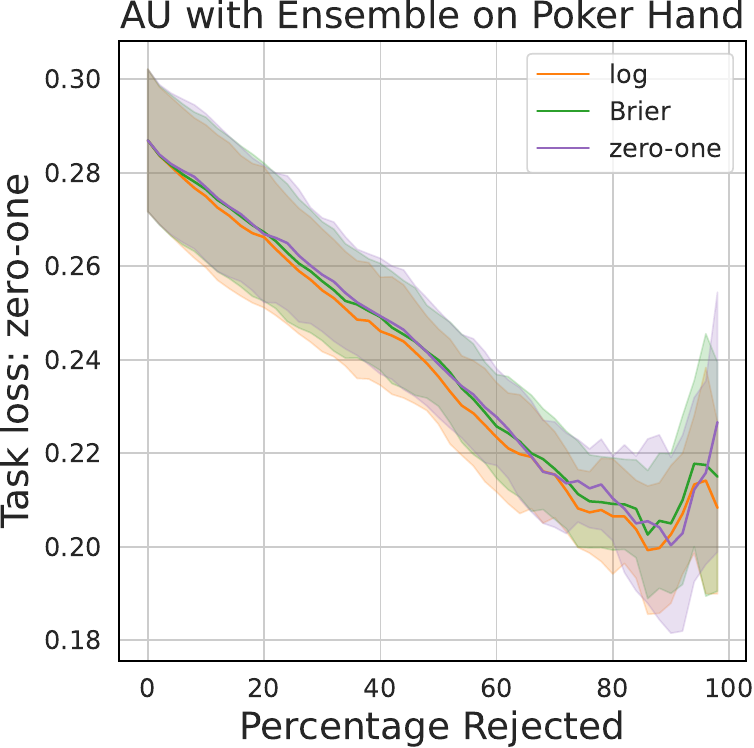} \\ [0.5cm]
  \includegraphics[width=.9\linewidth]{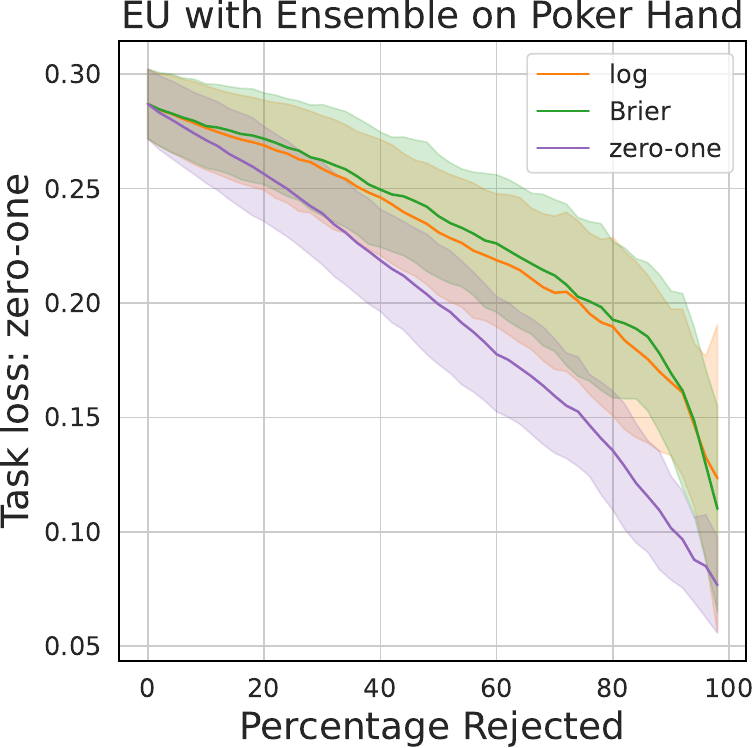}
\end{minipage}%
\begin{minipage}{.33\textwidth}
  \centering
  \includegraphics[width=.9\linewidth]{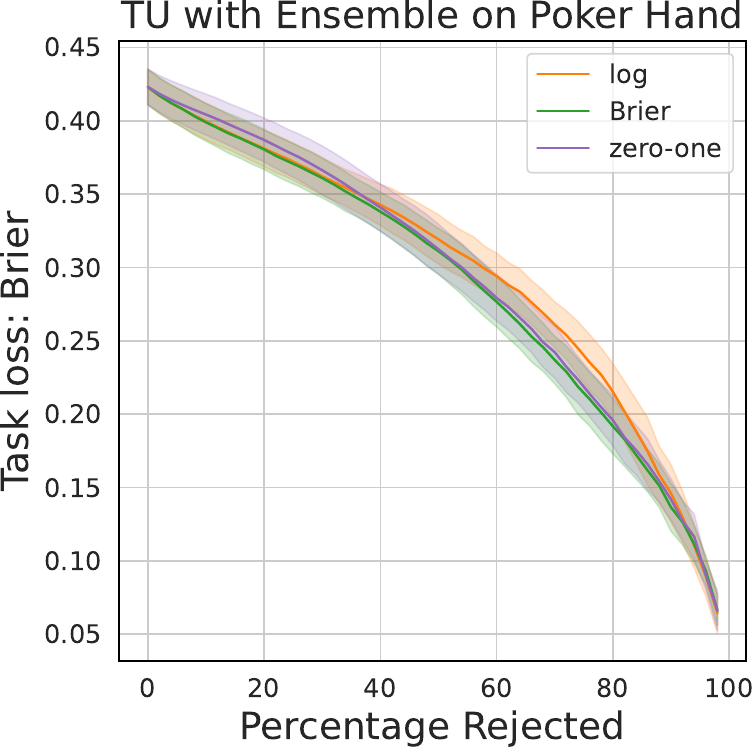} \\ [0.5cm]
  \includegraphics[width=.9\linewidth]{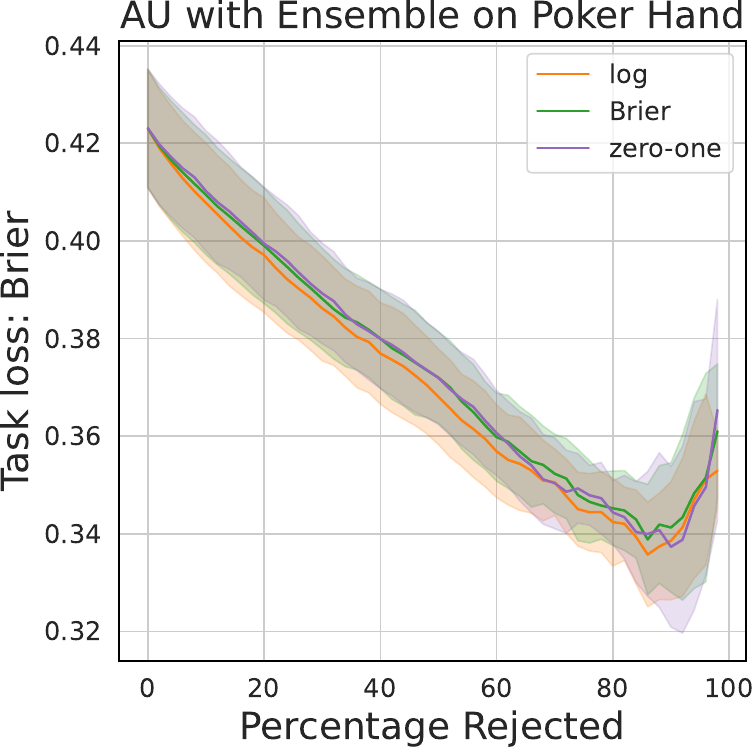} \\ [0.5cm]
  \includegraphics[width=.9\linewidth]{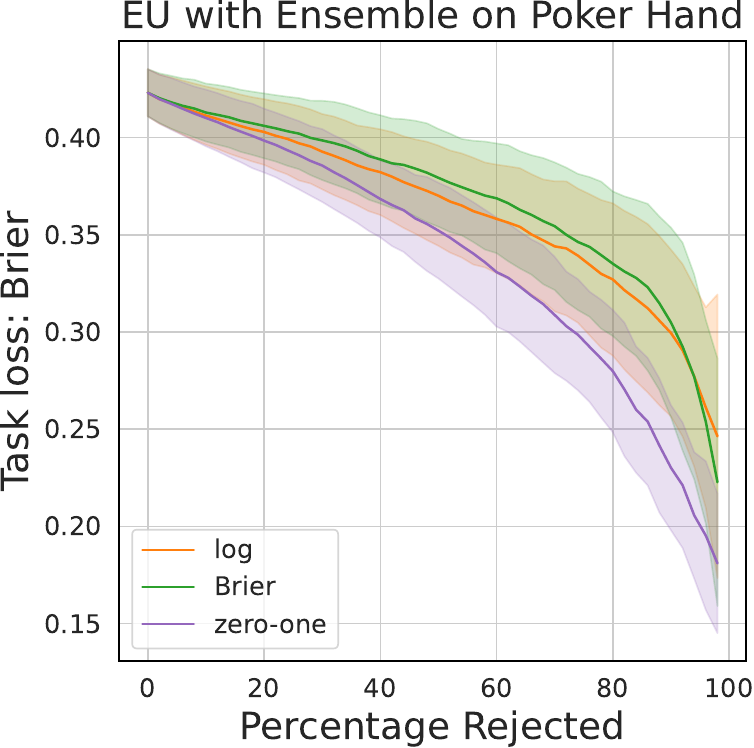}
\end{minipage}
\caption{\textbf{Selective Prediction} with different task losses using the total uncertainty (top row), aleatoric uncertainty (middle row) and epistemic uncertainty (bottom row) component as the rejection criterion. The line shows the mean and the shaded area represents the standard deviation over three runs.}
\label{fig:app-sp-tu-au-eu-pokerhand}
\end{figure*}

In addition, we present selective prediction results for the CIFAR-10 dataset using total uncertainty as the rejection criterion in \cref{fig:app-sp-tu-cifar10}. As mentioned, the cumulative probability of the top-2 classes across the test instances is $0.989 \pm 0.042$, which explains the similar behavior for the uncertainty measures.

\begin{figure*}[ht]
\centering
\begin{minipage}{.33\textwidth}
  \centering
  \includegraphics[width=.9\linewidth]{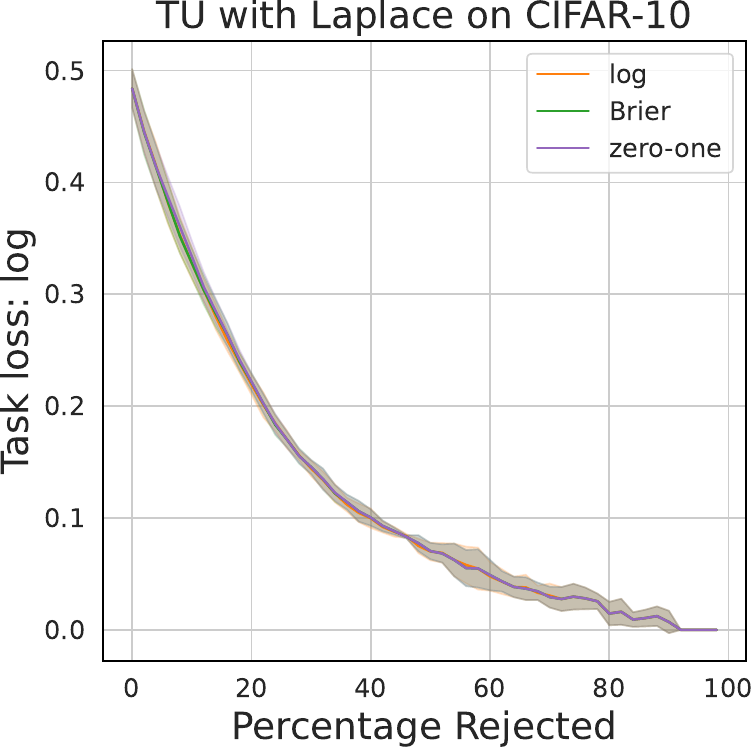}
\end{minipage}%
\begin{minipage}{.33\textwidth}
  \centering
  \includegraphics[width=.9\linewidth]{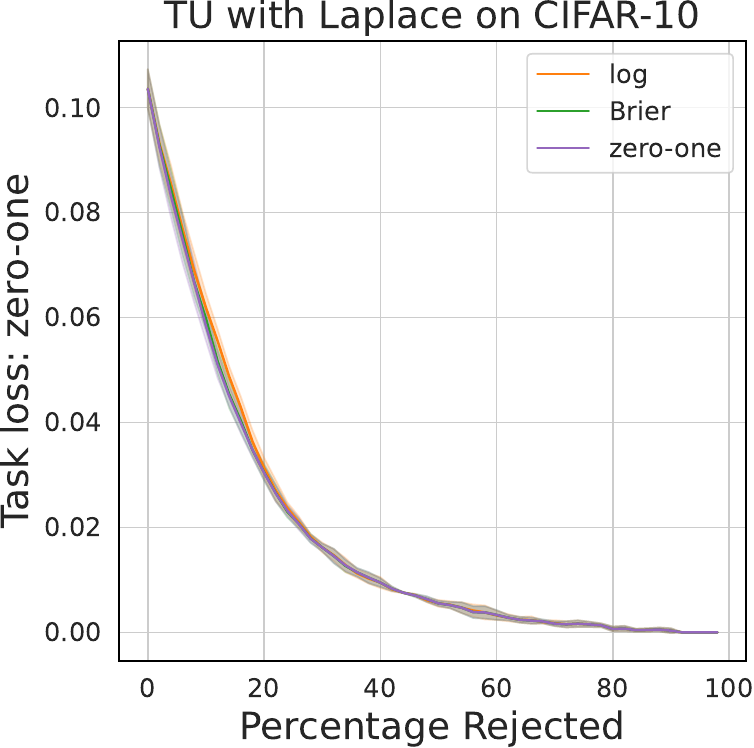}
\end{minipage}%
\begin{minipage}{.33\textwidth}
  \centering
  \includegraphics[width=.9\linewidth]{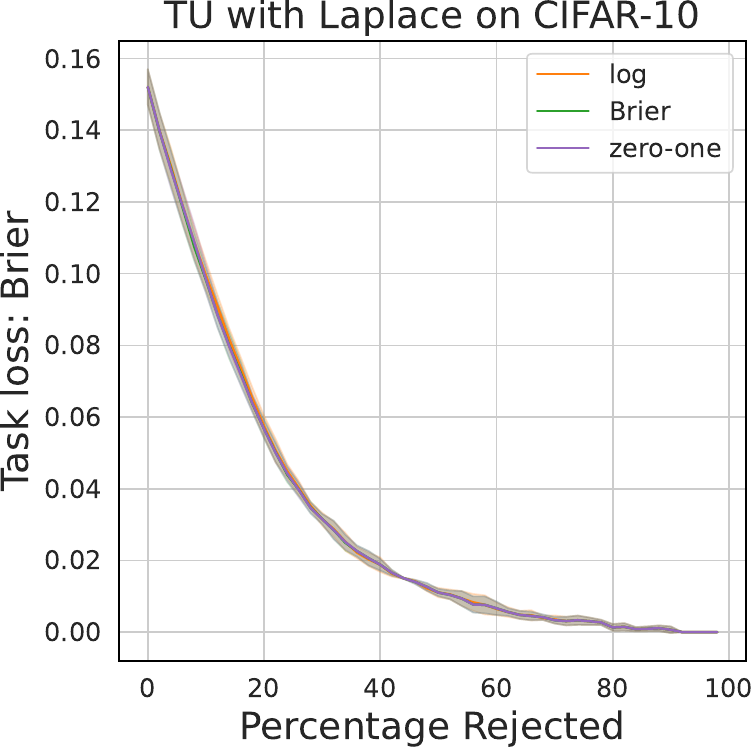}
\end{minipage}
\caption{\textbf{Selective Prediction} with different task losses using the aleatoric uncertainty (top row) and epistemic uncertainty (bottom row) component as the rejection criterion. The line shows the mean and the shaded area represents the standard deviation over three runs.}
\label{fig:app-sp-tu-cifar10}
\end{figure*}

We provide selective prediction results for the CoverType dataset using the total uncertainty based on the spherical loss in \cref{fig:app-sp-tu-spherical}. The spherical loss, a strictly proper scoring rule, is not discussed in the main paper as it is not as common as the log-loss, Brier-loss or zero-one-loss. The derivations of the uncertainty measures for this loss can be found in \cref{app:derivations}.

\begin{figure*}[ht]
\centering
\begin{minipage}{.25\textwidth}
  \centering
  \includegraphics[width=.9\linewidth]{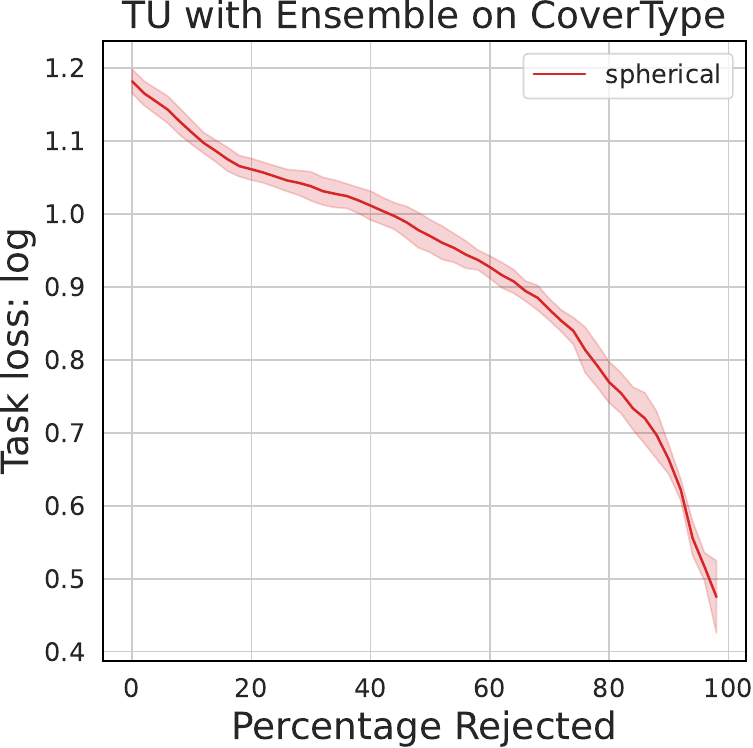}
\end{minipage}%
\begin{minipage}{.25\textwidth}
  \centering
  \includegraphics[width=.9\linewidth]{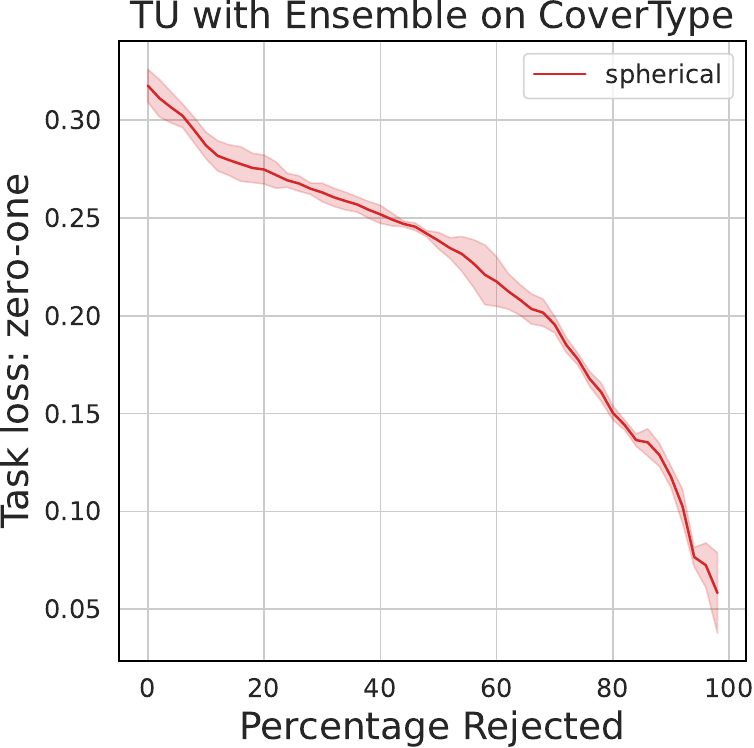}
\end{minipage}%
\begin{minipage}{.25\textwidth}
  \centering
  \includegraphics[width=.9\linewidth]{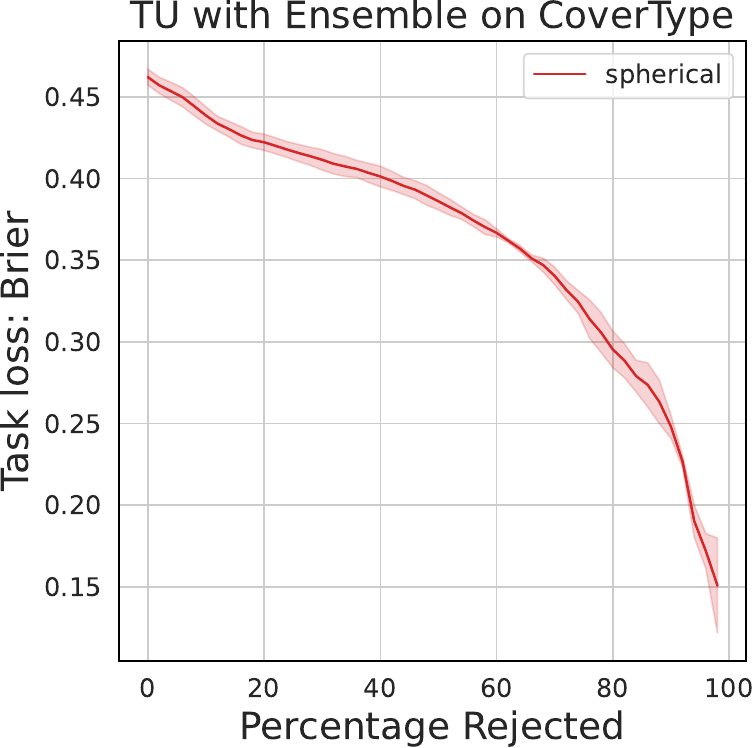}
\end{minipage}%
\begin{minipage}{.25\textwidth}
  \centering
  \includegraphics[width=.9\linewidth]{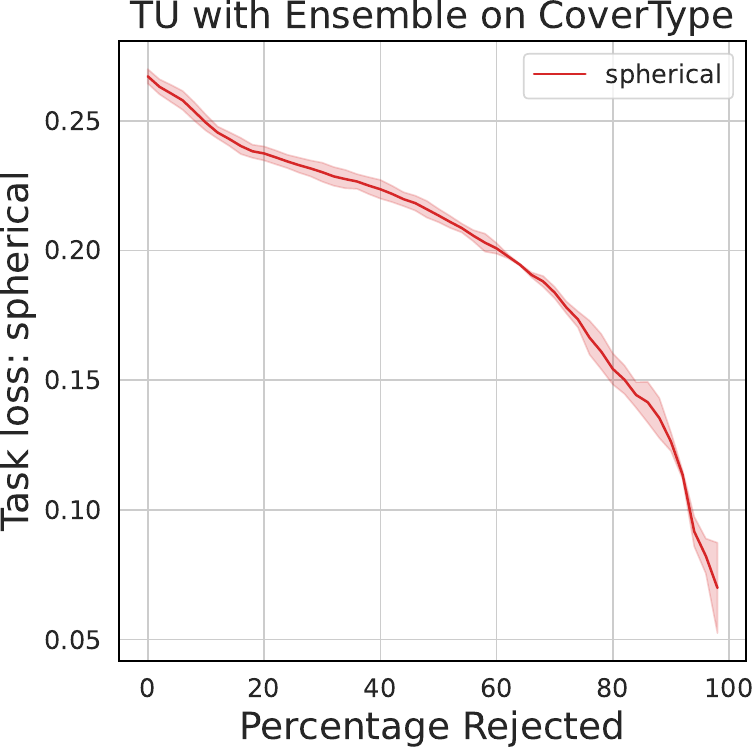}
\end{minipage}
\caption{\textbf{Selective Prediction} with different task losses using the total uncertainty component as the rejection criterion. The line shows the mean and the shaded area represents the standard deviation over three runs.}
\label{fig:app-sp-tu-spherical}
\end{figure*}

\subsection{Out-of-Distribution Detection}\label{app:ood}
We present additional Out-of-Distribution results for the ImageNet and Food101 datasets with different uncertainty representations in \cref{tab:app-ood-imagenet} and \cref{tab:app-ood-food101}, respectively. This confirms that mutual information, the log-based epistemic uncertainty measure performs best for the Out-of-Distribution downstream task. Additionally, we provide Out-of-Distribution results for the CIFAR-10 dataset using epistemic uncertainty based on the spherical loss in \cref{tab:ood-spherical}. Comparing these results to the ones presented in \cref{tab:ood} confirms that mutual information performs best, as mentioned before.

\begin{table*}[ht]
\centering
\caption{\textbf{Out-of-Distribution detection} with ImageNet as in-Distribution data based on epistemic uncertainty. The mean and standard deviation over three runs are reported. Best measures are in \textbf{bold}.}
\label{tab:app-ood-imagenet}
\begin{tabularx}{0.85\textwidth}{XX>{\centering\arraybackslash}X>{\centering\arraybackslash}X>{\centering\arraybackslash}X}
\toprule
Dataset                      & Method & log & Brier & zero-one \\ \midrule
\multirow{2}{*}{ImageNet-O} & Dropout & $ \textbf{0.711} \scriptstyle{\pm 0.009} $ & $ 0.688 \scriptstyle{\pm 0.008} $ & $ 0.550 \scriptstyle{\pm 0.006} $ \\
                            & Laplace & $ \textbf{0.789} \scriptstyle{\pm 0.006} $ & $ 0.713 \scriptstyle{\pm 0.005} $ & $ 0.678 \scriptstyle{\pm 0.008} $ \\\midrule
\multirow{2}{*}{CIFAR-100} & Dropout & $ \textbf{0.876} \scriptstyle{\pm 0.002} $ & $ 0.753 \scriptstyle{\pm 0.002} $ & $ 0.721 \scriptstyle{\pm 0.002} $ \\
                            & Laplace & $ \textbf{0.935} \scriptstyle{\pm 0.001} $ & $ 0.892 \scriptstyle{\pm 0.002} $ & $ 0.894 \scriptstyle{\pm 0.002} $ \\\midrule
\multirow{2}{*}{Places365}  & Dropout     & $ \textbf{0.809} \scriptstyle{\pm 0.001} $ & $ 0.744 \scriptstyle{\pm 0.001} $ & $ 0.671 \scriptstyle{\pm 0.001} $ \\
                            & Laplace & $ \textbf{0.811} \scriptstyle{\pm 0.001} $ & $ 0.732 \scriptstyle{\pm 0.003} $ & $ 0.780 \scriptstyle{\pm 0.002} $ \\\midrule
\multirow{2}{*}{SVHN}       & Dropout     & $ \textbf{0.969} \scriptstyle{\pm 0.001} $ & $ 0.580 \scriptstyle{\pm 0.002} $ & $ 0.857 \scriptstyle{\pm 0.002} $ \\
                            & Laplace & $ \textbf{0.994} \scriptstyle{\pm 0.000} $ & $ 0.956 \scriptstyle{\pm 0.001} $ & $ 0.983 \scriptstyle{\pm 0.001} $ \\ \bottomrule 
\end{tabularx}
\end{table*}

\begin{table*}[ht]
\centering
\caption{\textbf{Out-of-Distribution detection} with Food101 as in-Distribution data based on epistemic uncertainty. The mean and standard deviation over three runs are reported. Best measures are in \textbf{bold}.}
\label{tab:app-ood-food101}
\begin{tabularx}{0.85\textwidth}{XX>{\centering\arraybackslash}X>{\centering\arraybackslash}X>{\centering\arraybackslash}X}
\toprule
Dataset                      & Method & log & Brier & zero-one \\ \midrule
\multirow{2}{*}{CIFAR-100}  & Dropout & $ \textbf{0.990} \scriptstyle{\pm 0.000} $ & $ 0.802 \scriptstyle{\pm 0.002} $ & $ 0.921 \scriptstyle{\pm 0.001} $ \\
                            & Laplace & $ \textbf{0.998} \scriptstyle{\pm 0.000} $ & $ 0.996 \scriptstyle{\pm 0.000} $ & $ 0.997 \scriptstyle{\pm 0.000} $ \\\midrule
\multirow{2}{*}{Places365}  & Dropout & $ \textbf{0.987} \scriptstyle{\pm 0.000} $ & $ 0.803 \scriptstyle{\pm 0.002} $ & $ 0.917 \scriptstyle{\pm 0.002} $ \\
                            & Laplace & $ \textbf{0.996} \scriptstyle{\pm 0.000} $ & $ 0.993 \scriptstyle{\pm 0.000} $ & $ 0.994 \scriptstyle{\pm 0.000} $ \\\midrule
\multirow{2}{*}{SVHN}       & Dropout & $ \textbf{0.971} \scriptstyle{\pm 0.000} $ & $ 0.785 \scriptstyle{\pm 0.003} $ & $ 0.934 \scriptstyle{\pm 0.001} $ \\
                            & Laplace & $ \textbf{0.999} \scriptstyle{\pm 0.000} $ & $ 0.995 \scriptstyle{\pm 0.000} $ & $ 0.996 \scriptstyle{\pm 0.000} $ \\ \bottomrule 
\end{tabularx}
\end{table*}

\begin{table*}[ht]
\centering
\caption{\textbf{Out-of-Distribution detection} with CIFAR-10 as in-Distribution data based on epistemic uncertainty. The mean and standard deviation over three runs are reported. Best measures are in \textbf{bold}.}
\label{tab:ood-spherical}
\begin{tabularx}{0.85\textwidth}{XX>{\centering\arraybackslash}X}
\toprule
Dataset                      & Method & spherical \\ \midrule
\multirow{3}{*}{CIFAR-100} & Dropout & $ 0.823 \scriptstyle{\pm 0.000} $\\
                            & Ensemble & $ 0.854 \scriptstyle{\pm 0.002} $ \\
                            & Laplace & $ 0.839 \scriptstyle{\pm 0.002} $ \\\midrule
\multirow{3}{*}{Places365}  & Dropout     & $ 0.829 \scriptstyle{\pm 0.001} $ \\
                            & Ensemble & $ 0.849 \scriptstyle{\pm 0.002} $   \\
                            & Laplace & $ 0.854 \scriptstyle{\pm 0.001} $ \\\midrule
\multirow{3}{*}{SVHN}       & Dropout     & $ 0.829 \scriptstyle{\pm 0.000} $\\
                            & Ensemble & $ 0.869 \scriptstyle{\pm 0.006} $\\
                            & Laplace & $ 0.848 \scriptstyle{\pm 0.012} $ \\ \bottomrule 
\end{tabularx}
\end{table*}

\subsection{Active Learning}\label{app:active-learning}
We present active learning results with additional datasets in \cref{fig:app-al}, confirming the good performance of the zero-one-based epistemic uncertainty measure for this task. In addition, we provide active learning results using the epistemic uncertainty component based on spherical loss in \cref{fig:app-al-spherical}. Comparing \cref{fig:active-learning} and \cref{fig:app-al-spherical} confirms that the zero-one-based epistemic uncertainty measures remain the best for active learning.

\begin{figure*}[ht]
\centering
\begin{minipage}{.33\textwidth}
  \centering
  \includegraphics[width=.8\linewidth]{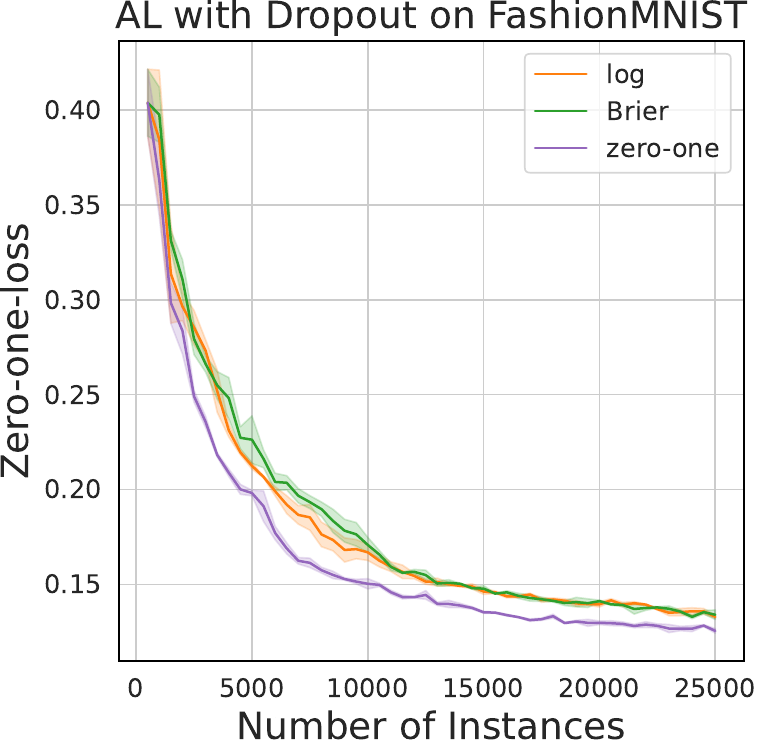}
\end{minipage}%
\begin{minipage}{.33\textwidth}
  \centering
  \includegraphics[width=.8\linewidth]{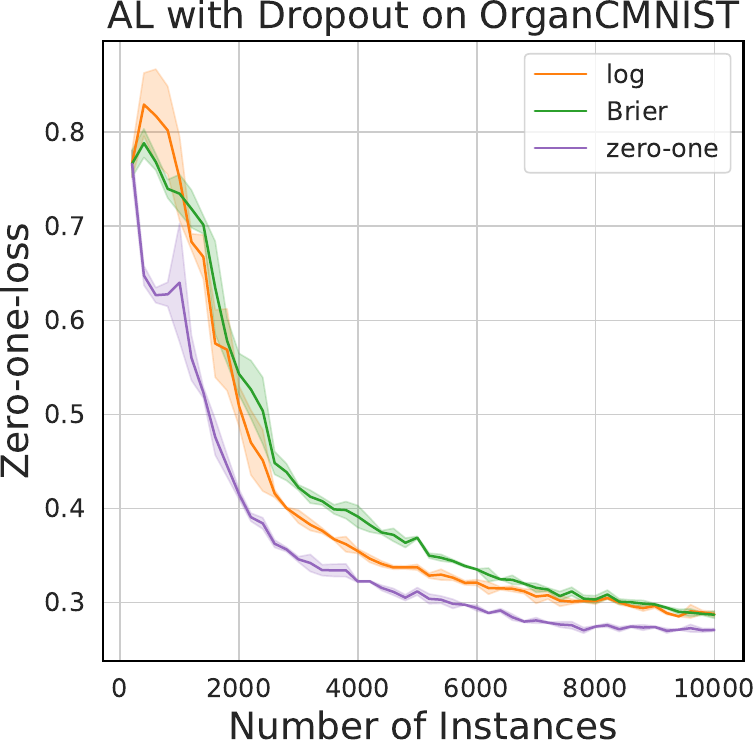}
\end{minipage}%
\begin{minipage}{.33\textwidth}
  \centering
  \includegraphics[width=.8\linewidth]{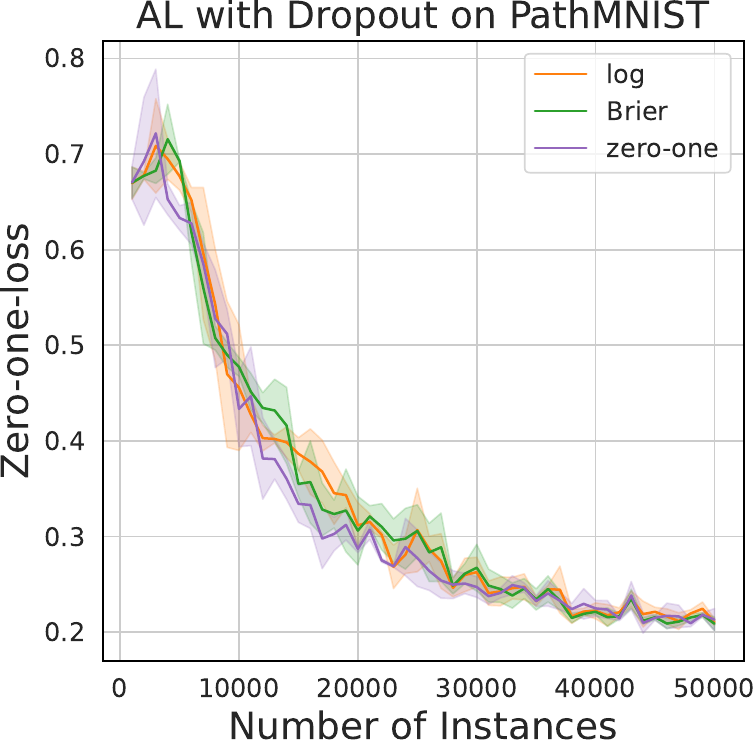}
\end{minipage}
\caption{\textbf{Active Learning} with different datasets using the epistemic uncertainty component to query new instances. The model is evaluated using the zero-one-loss on the test instances. The line shows the mean and the shaded area represents the standard deviation over three runs.}
\label{fig:app-al}
\end{figure*}

\begin{figure*}[ht]
\centering
\begin{minipage}{.33\textwidth}
  \centering
  \includegraphics[width=.8\linewidth]{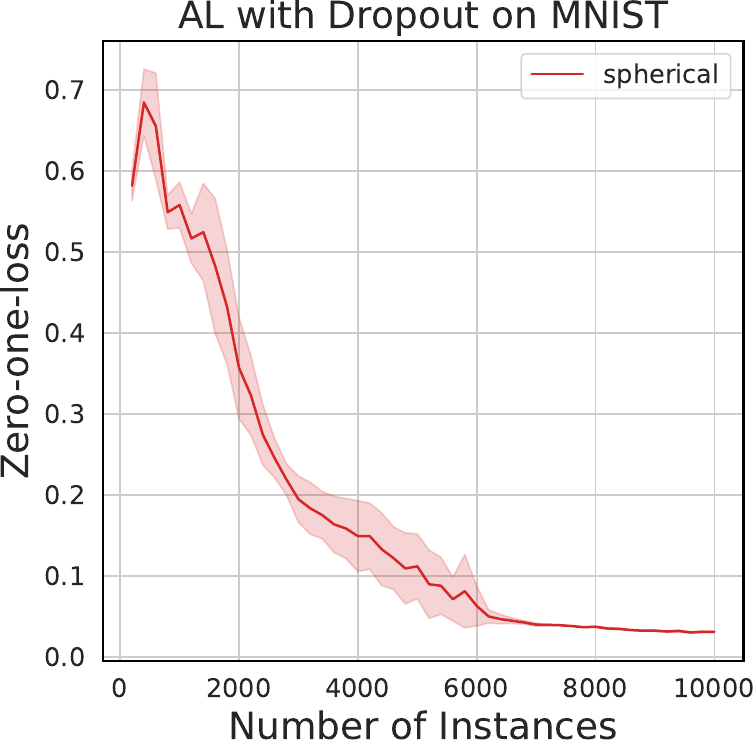}
\end{minipage}%
\begin{minipage}{.33\textwidth}
  \centering
  \includegraphics[width=.8\linewidth]{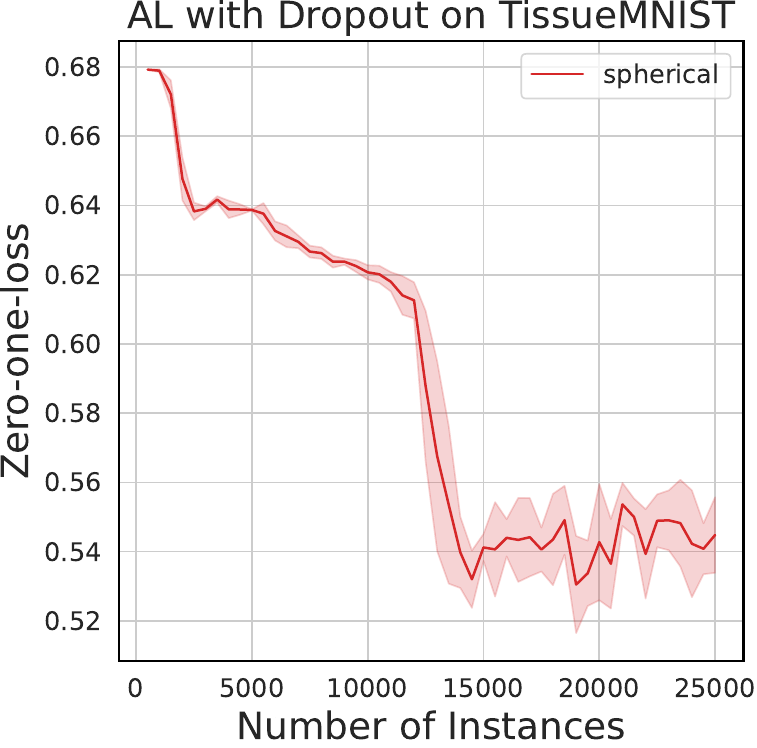}
\end{minipage}%
\begin{minipage}{.33\textwidth}
  \centering
  \includegraphics[width=.8\linewidth]{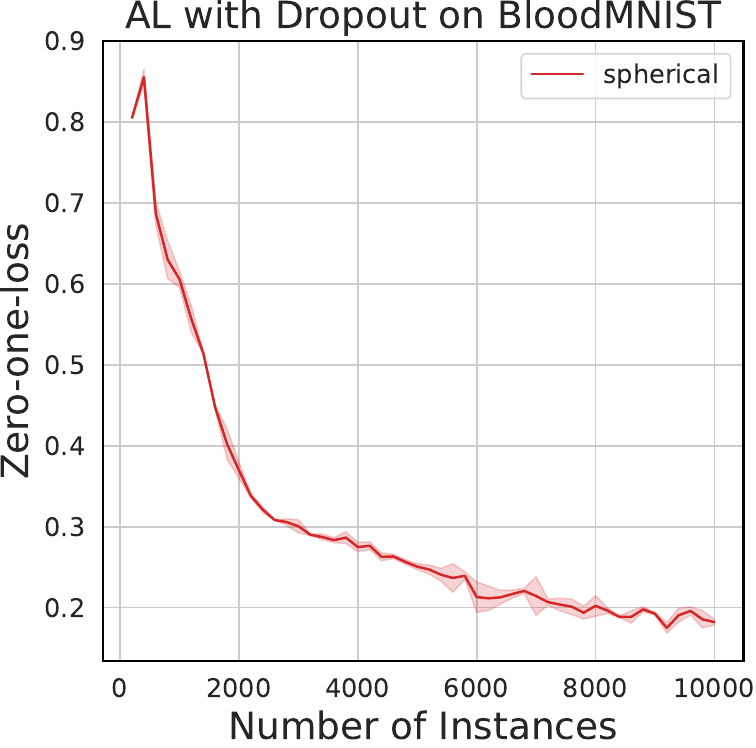}
\end{minipage}
\caption{\textbf{Active Learning} with different datasets using the epistemic uncertainty component to query new instances. The model is evaluated using the zero-one-loss on the test instances. The line shows the mean and the shaded area represents the standard deviation over three runs.}
\label{fig:app-al-spherical}
\end{figure*}

\end{document}